\newcommand{\blue}[1]{\textcolor{blue}{#1}}
\theoremstyle{plain}
\newtheorem{theorem}{Theorem}[section]
\newtheorem{proposition}[theorem]{Proposition}
\theoremstyle{definition}
\theoremstyle{remark}
\definecolor{citationcolor}{RGB}{80, 90, 180}
\newsavebox\CBox
\newcommand{\tbf}[1]{\sbox\CBox{#1}\resizebox{\wd\CBox}{\ht\CBox}{\textbf{#1}}}
\newcommand{\tpm}[1]{\small{$\pm{\, #1}$}}
\newcommand{\tv}[3][n]{%
    \ifthenelse{\equal{#1}{n}}{%
        \ifthenelse{\equal{#3}{}}{#2}{#2 \tpm{#3}}%
    }{\ifthenelse{\equal{#1}{b}}{%
        \ifthenelse{\equal{#3}{}}{\tbf{#2}}{\tbf{#2} \tpm{#3}}%
    }{\ifthenelse{\equal{#1}{g}}{%
        \ifthenelse{\equal{#3}{}}{\color{gray}#2}{\color{gray}#2 \tpm{#3}}%
    }{%
        \textcolor{red}{ERROR}%
    }}%
}}
\lstdefinestyle{mystyle}{
    commentstyle=\color{OliveGreen},
    numberstyle=\tiny\color{black!60},
    stringstyle=\color{BrickRed},
    basicstyle=\ttfamily\scriptsize,
    breakatwhitespace=false,
    breaklines=true,
    captionpos=b,
    keepspaces=true,
    numbers=none,
    numbersep=5pt,
    showspaces=false,
    showstringspaces=false,
    showtabs=false,
    tabsize=2
}
\def\PM#1{\tiny{\textpm#1}}
\def\BF#1{\sbox\CBox{#1}\resizebox{\wd\CBox}{\ht\CBox}{\textbf{#1}}}
\newcommand{\bm}{\mathbf{m}}
\newcommand{\bo}{\mathbf{o}}
\newcommand{\bw}{\mathbf{w}} 
\newcommand{\bx}{\mathbf{x}}
\newcommand{\by}{\mathbf{y}}
\newcommand{\bz}{\mathbf{z}}
\newcommand{\bA}{\mathbf{A}}
\newcommand{\bB}{\mathbf{B}}
\newcommand{\bC}{\mathbf{C}}
\newcommand{\bD}{\mathbf{D}}
\newcommand{\bH}{\mathbf{H}}
\newcommand{\bI}{\mathbf{I}}
\newcommand{\bS}{\mathbf{S}}
\newcommand{\bT}{\mathbf{T}}
\newcommand{\bV}{\mathbf{V}}
\newcommand{\bW}{\mathbf{W}}
\newcommand{\bX}{\mathbf{X}}
\newcommand{\bZ}{\mathbf{Z}}
\newcommand{\bbA}{\mathbb{A}}
\newcommand{\bbE}{\mathbb{E}}
\newcommand{\bbO}{\mathbb{O}}
\newcommand{\bbP}{\mathbb{P}}
\newcommand{\bbQ}{\mathbb{Q}}
\newcommand{\bbR}{\mathbb{R}}
\newcommand{\bLambda}{\boldsymbol{\Lambda}}
\theoremstyle{plain}%
\theoremstyle{definition}
\theoremstyle{remark}
\def\[#1\]{\begin{align}#1\end{align}}
\newcommand{\norm}[1]{\left\lVert{#1}\right\rVert}
\newcommand{\ie}{\textit{i}.\textit{e}., }
\icmltitlerunning{A Foundational Brain Dynamics Model via Stochastic Optimal Control}
\begin{document}

\twocolumn[
\icmltitle{A Foundational Brain Dynamics Model via Stochastic Optimal Control}

\icmlsetsymbol{equal}{*}
\icmlsetsymbol{corr}{†}

\begin{icmlauthorlist}
\icmlauthor{Joonhyeong Park}{equal,kaist}
\icmlauthor{Byoungwoo Park}{equal,kaist}
\icmlauthor{Chang-Bae Bang}{yonsei}
\icmlauthor{Jungwon Choi}{kaist}
\icmlauthor{Hyungjin Chung}{kaist,everex}
\icmlauthor{Byung-Hoon Kim}{corr,yonsei,everex}
\icmlauthor{Juho Lee}{corr,kaist,aitrics}
\end{icmlauthorlist}

\icmlaffiliation{kaist}{KAIST}
\icmlaffiliation{yonsei}{Yonsei University}
\icmlaffiliation{everex}{EverEx}
\icmlaffiliation{aitrics}{AITRICS}

\icmlcorrespondingauthor{Byung-Hoon Kim}{egyptdj@yonsei.ac.kr}
\icmlcorrespondingauthor{Juho Lee}{juholee@kaist.ac.kr}

\icmlkeywords{Stochastic Optimal Control, Brain Dynamics, Foundation Model, Neuroimage}

\vskip 0.3in
]

\printAffiliationsAndNotice{\icmlEqualContribution} %

\begin{abstract}
We introduce a foundational model for brain dynamics that utilizes stochastic optimal control (SOC) and amortized inference. Our method features a continuous-discrete state space model (SSM) that can robustly handle the intricate and noisy nature of fMRI signals. To address computational limitations, we implement an approximation strategy grounded in the SOC framework. Additionally, we present a simulation-free latent dynamics approach that employs locally linear approximations, facilitating efficient and scalable inference. For effective representation learning, we derive an Evidence Lower Bound (ELBO) from the SOC formulation, which integrates smoothly with recent advancements in self-supervised learning (SSL), thereby promoting robust and transferable representations. Pre-trained on extensive datasets such as the UKB, our model attains state-of-the-art results across a variety of downstream tasks, including demographic prediction, trait analysis, disease diagnosis, and prognosis. Moreover, evaluating on external datasets such as HCP-A, ABIDE, and ADHD200 further validates its superior abilities and resilience across different demographic and clinical distributions. Our foundational model provides a scalable and efficient approach for deciphering brain dynamics, opening up numerous applications in neuroscience.

\end{abstract}

\section{Introduction}
\label{sec:main:introduction}

\begin{figure}[!t]
\centering
\includegraphics[width=0.48\textwidth,]{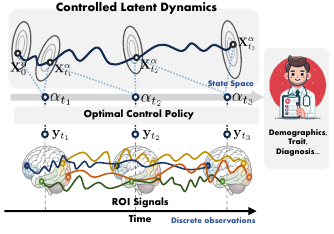}
\vspace{-5mm}
\caption{Conceptual illustration of our proposed \textbf{Brain Dynamics with Optimal control (BDO)}. The ROI signals observed at \textit{discrete} time points are encoded into an optimal control policy, which steers the \textit{continuous} latent state dynamics. The pre-trained optimal control policy is then utilized for various downstream tasks.}
\vspace{-6mm}
\end{figure}

Functional Magnetic Resonance Imaging (fMRI) measures changes in the blood-oxygen-level-dependent (BOLD) signal, an indirect and noisy observation of underlying neural activity~\citep{doi:10.1073/pnas.87.24.9868}. These signals reflect latent brain dynamics that are fundamental to understanding human cognition and psychopathology~\citep{LeeEtAl2022b, CaiEtAl2021, TaghiaEtAl2018}. A central goal of fMRI analysis is to extract, interpret, and understand this unobserved latent signal, as it provides valuable insights into brain function and its perturbations in disease states.

State-Space Models (SSMs) are a natural choice for modeling the latent processes underlying fMRI data, as they explicitly account for the dynamics of unobserved states and their relationship to noisy observations~\citep{FRISTON20031273, 10.1093/nsr/nwae079}. In neuroscience, SSMs have been extensively employed in methods like Dynamic Causal Modeling (DCM)~\citep{FRISTON20031273,triantafyllopoulos2021bayesian} to infer effective connectivity through Bayesian filtering. Other applications include modeling dynamic functional connectivity and capturing time-varying patterns in resting-state fMRI. However, traditional SSM approaches often impose strong simplifying assumptions, such as linearity in the state dynamics and observation models, which limit their ability to capture the complex, non-linear, and high-dimensional nature of brain activity. Moreover, they do not fully leverage modern machine learning techniques, leaving significant potential untapped. As a result, conventional SSMs may be unsuitable for building foundation models for various real-world applications.

\vspace{3mm}
Recently, the field has seen a surge in interest in self-supervised learning (SSL)~\cite{lecun2022path,he2022masked} approaches for fMRI data, which aim to learn transferrable representations from brain signals. Notable models, such as BrainLM~\citep{caro2024brainlm} and BrainJEPA~\citep{dong2024brain}, have showcased the potential of SSL in extracting representations that generalize well across diverse tasks and datasets. These models rely on SSL objectives such as masked prediction~\citep{he2022masked} or joint-embedding frameworks~\citep{assran2023self} to uncover structure in the data without requiring explicit labels. While these methods excel at learning global representations, they inherently lack the inductive biases necessary to capture key properties of the fMRI signal, particularly its temporal structure and the uncertainty arising from its noisy and indirect nature.

The absence of a principled approach to modeling temporal dynamics in SSL frameworks is a critical limitation for fMRI data. Unlike natural images, fMRI recordings are time-series, where the observed BOLD signal evolves over time and reflects latent neural activity. Purely data-driven SSL methods~\citep{caro2024brainlm, dong2024brain} often treat these signals as independent or use heuristics to aggregate information across time, which may overlook crucial temporal dependencies. This limitation restricts the ability of SSL models to fully capture the dynamic nature of brain activity, potentially missing fine-grained patterns that are essential for understanding underlying neural mechanisms.

In this work, we propose \textbf{Brain Dynamics with Optimal control (BDO)}, a novel approach that bridges the strengths of state-space modeling and modern representation learning. BDO introduces a continuous-discrete SSM framework powered by stochastic optimal control (SOC)~\citep{fleming2006controlled, carmona2016lectures} and amortized inference. To ensure scalability and utility as a foundation model, BDO incorporates SSL principles, enabling it to extract transferrable representations from large-scale datasets. The resulting model achieves state-of-the-art performance on a wide range of downstream tasks, including demographic prediction, trait analysis, and clinical diagnosis, while demonstrating robust scalability, efficiency, and interpretability. By addressing the limitations of traditional SSMs and leveraging the latest advances in machine learning, BDO sets a new standard for modeling brain dynamics from fMRI data. We summarize our contributions as follows:
\vspace{-2mm}
\begin{itemize}[leftmargin=10pt]
    \item We combine continuous-discrete SSMs under SOC theory with SSL to capture transferable representations.
    \item Built on the SOC formulation, our amortized inference scheme enables efficient and scalable learning.
    \item We demonstrate that our approach outperforms baselines on a variety of downstream tasks, maintaining both computational efficiency and robust scalability.
\end{itemize}

\section{Related Work}
\label{sec:main:relatedwork}

\begin{figure}
\centering
\includegraphics[width=0.46\textwidth,]{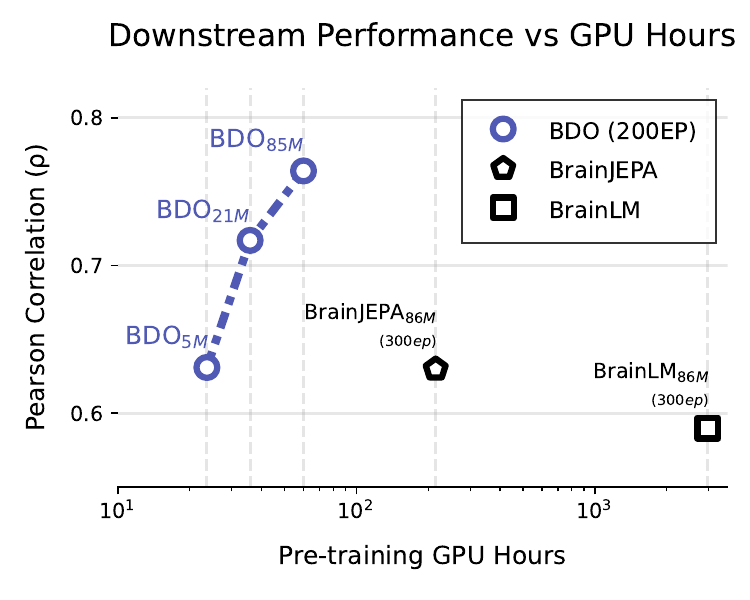}
\vspace{-4mm}
\caption{Our BDO surpasses other foundation models, demonstrating outstanding efficiency. Even the smallest BDO (5M), achieves comparable performance while being significantly efficient in both parameters and resource usage.}\label{fig:scalability_gpu}
\vspace{-6mm}
\end{figure}

\textbf{State-Space Models for fMRI. }
SSMs provide an elegant framework for analyzing fMRI data by modeling hidden neuronal states and their dynamics. A widely used example is DCM, a Bayesian SSM framework for estimating effective connectivity, which has been a cornerstone for fMRI~\citep{FRISTON20031273,triantafyllopoulos2021bayesian, novelli2024spectral}. However, it often assumes stationarity and linearity in state dynamics, limiting its ability to capture complex, non-linear brain dynamics~\citep{daunizeau2012stochastic}. Beyond DCM, SSMs have been used to model dynamic functional connectivity, capturing temporal interactions among brain regions~\citep{chakravarty2019state,zhang2021voxel}. Recent advances integrate neural networks with SSMs to improve temporal modeling, as seen in approaches like Brain-Mamba~\citep{behrouz2024brain, wei2024hierarchical}. 
We propose an efficient SSM with an SSL objective, designed for foundation models to capture complex, non-stationary dynamics with improved scalability.

\textbf{Stochastic Optimal Control for Sequential Models. } SOC is a mathematical framework that optimizes control policies for stochastic systems often modeled via SDEs. \citet{li2020scalable} derived the Evidence Lower Bound (ELBO) for the posterior distribution of latent dynamics and optimized parameterized non-linear dynamics. \citet{heng2020controlled} investigated approximations of the smoothing distribution using control theory, which was later extended to continuous settings by \citet{lu2024guidance}. \citet{chopin2023computational} applied SOC theory to approximate Doob's $h$-transform, leading to the development of an online filtering algorithm. More recently, \citet{park2024amortized} explored efficient methods for approximating the posterior distribution by leveraging SOC. In this work, we utilize SOC to develop an efficient SSL framework for SSMs for brain dynamics foundation models.

\section{Preliminaries: Continuous-Discrete SSMs}
\label{sec:main:preliminaries}
Let us consider a time series $\by_{t_1:t_k} := \{\by_{t_i}\}_{i=1}^k$  observed from a complete underlying continuous dynamics $\mathcal{Y} := \{\by_t\}_{t \in [0, T]}$ over an interval $[0, T]$, for each $\by_{t_i} \in \bbR^n$. Because we have only access to observations at \textit{discrete} context time stamps $\mathcal{T}_{\text{obs}} := \{t_i\}_{i=1}^k \subset [0, T]$, where $0 = t_0 \leq \cdots \leq t_k = T$, we focus on the discrete observations corresponding to $\mathcal{T}_{\text{obs}}$ denoted as $\mathcal{Y}_{\text{obs}} := \by_{t \in \mathcal{T}_{\text{obs}}}$.

In state-space models, these observations $\mathcal{Y}_{\text{obs}}$ are assumed to be generated from noisy measurement processes, which can be modeled as $\by_t \sim g(\cdot | \bX_t)$, where the latent state $\bX_t$ represents the underlying latent states of $\mathcal{Y}$. To build a general framework, we consider \textit{continuous} latent states $\bX_t$ defined over the interval $[0, T]$, stochastic processes governed by an It\^o stochastic differential equation (SDE):
\[\label{eq:prior dynamics}
& d\bX_t = f(t, \bX_t) dt + \sigma(t) d\bW_t,
\]
where $f(t, \cdot) : \mathbb{R}^d \to \mathbb{R}^d$ is the drift, $\sigma(t) \in \mathbb{R} \to \mathbb{R}^d$ is the diffusion coefficient and $\bW_t \in \mathbb{R}^d$ is a standard Wiener process. Within this framework, the goal is to estimate the \textit{posterior} distribution - the optimal probabilistic estimates of the latent continuous state dynamics $\bX_{[0, T]}$ given the context observations $\mathcal{Y}_{\text{obs}}$. By Bayes' rule, the posterior is written as follows:
\[\label{eq:posterior distribution}
    p(\bX_{[0:T]} | \mathcal{Y}_{\text{obs}}) &= \frac{1}{\bZ(\mathcal{Y}_{\text{obs}})} p(\mathcal{Y}_{\text{obs}} | \bX_{[0, T]}) p(\bX_{[0:T]}) \\
    & = \frac{1}{\bZ(\mathcal{Y}_{\text{obs}})} \prod_{t \in \mathcal{T}_{\text{obs}}} g(\by_{t} | \bX_{t})p(\bX_{[0:T]}),
\]
where $p(\mathcal{Y}_{\text{obs}} | \bX_{[0, T]}):=\prod_{t \in \mathcal{T}_{\text{obs}}} g(\by_{t} | \bX_{t})$, $\bZ(\mathcal{Y}_{\text{obs}}) = \int p(\mathcal{Y}|\bX_{[0:T]})p(\bX_{[0:T]})d\bX_{[0:T]}$ is a normalization constant and $p(\bX_{0:T})$ is the prior distribution 
obtained as a solution of the prior SDE in \eqref{eq:prior dynamics}.
The posterior distribution~\eqref{eq:posterior distribution} can be estimated by $k$ recursive Bayesian updates~\citep{särkkä2013bayesian}:
\begin{align}\label{eq:bayesian updates}
\lefteqn{p(\bX_{[0:t_k]} | \by_{t_1:t_{k-1}})}\nonumber\\
&\propto  \int  p(\bX_{t_k} | \bX_{t_{k-1}}) p(\bX_{[0:t_{k-1}]} |\by_{t_1:t_{k-1}}) d\bX_{[0:t_{k-1}]}, \nonumber\\
\lefteqn{p(\bX_{[0:t_k]} | \by_{t_1:t_{k}})}\nonumber\\ &\propto g(\by_{t_k} | \bX_{t_k})p(\bX_{[0:t_k]} |\by_{t_1:t_{k}}).
\end{align}
We assume that $p(\bX_{[0:t_0]} | \by_{0:t_0}) := p_0(\bX_0)$ is known and independent with the Wiener process $\bW_{[0, T]}$.  $p(\bX_{t_k} | \bX_{t_{k-1}})$ denotes a transition density describing the time-evolution of $\bX_t$ from $t_{k-1}$ to $t_k$. 
Once we infer the posterior \eqref{eq:posterior distribution}, we can use it for various inference tasks. For example, one may use it to obtain a conditional estimate of the full trajectory $\mathcal{Y}$, which is given by:
\[\label{eq:full trajectory estimation}
    p(\mathcal{Y} | \mathcal{Y}_{\text{obs}}) = \int p(\mathcal{Y} | \bX_{[0, T]}) p(\bX_{[0, T]} | \mathcal{Y}_{\text{obs}}) d\bX_{[0,  T]},
\]
where $p(\mathcal{Y} | \bX_{[0, T]}) := \prod_{t \in \mathcal{T}} g(\by_t | \bX_t)$. In other words, one can exploit the context $\mathcal{Y}_{\text{obs}}$ to estimate the entire sequence $\mathcal{Y}$ by performing the Bayesian updates in~\eqref{eq:bayesian updates} and then sampling $\by_t \sim g(\cdot | \bX_t)$.
However, this recursion incurs computational costs that scale with the length of the observations~\citep{sarkka2020temporal}.
Hence, applying this elegant paradigm directly to real-world large-scale datasets---particularly those with a large observation length---is not straightforward due to scalability issues.
\section{Brain Dynamics Foundation Model by Learning Amortized Optimal Control}
\label{sec:main:method}
In this section, we introduce our proposed algorithm BDO, a novel approach to brain dynamics foundation modeling. This method integrates amortized inference for continuous-discrete SSMs with the principles of SOC.

\subsection{Stochastic Optimal Control as Amortized Inference}
\label{sec:main:method:subsection:SOC}
Rather than relying on Bayesian recursion, we employ a SOC formulation to estimate the posterior distribution~\eqref{eq:posterior distribution}. SOC~\citep{fleming2006controlled, carmona2016lectures} is a mathematical framework that combines the principles of optimization and probability theory to determine the best possible control strategy for a given dynamical system under uncertainty. We consider the \textit{control-affine} SDEs as follows:
\[\label{eq:controlled dynamics}
 d\bX^{\alpha}_t = \left[f(t, \bX^{\alpha}_t) + \sigma(t)\alpha(t, \bX^{\alpha}_t)\right]dt + \sigma(t) d\bW_t,
\]
where $\alpha(t, \cdot): \bbR^d \to \bbR^d$ represent the \textit{Markov} control we aim to optimize. The objective is to determine an \textit{optimal control policy} $\alpha^{\star}$ that steers the distribution induced by the prior dynamics in~\eqref{eq:prior dynamics} to align with the posterior distribution. The solution to this SOC optimization problem, which is also closely connected to the variational inference framework, is typically structured as follows~\citep{theodorou2015nonlinear, kappen2016adaptive, li2020scalable, park2024amortized}:
\begin{proposition}[Evidence lower bound]\label{proposition:ELBO} Let us consider a following Markov control-affine problem formulation:
\[\label{eq:cost-to-go function}
    \mathcal{J}(\alpha,\mathcal{Y}) = \bbE_{\bX^{\alpha} \sim \eqref{eq:controlled dynamics}} \left[ \int_0^T \frac{1}{2} \norm{\alpha_t}^2 dt  - \sum_{t \in \mathcal{T}} \log g(\by_{t} | \bX^{\alpha}_{t})\right], 
\]
where $\bX^{\alpha}_t$ is given by a solution of the controlled SDEs in~\eqref{eq:controlled dynamics} with initial condition $\bX^{\alpha}_0 \sim p_0$. Then, the negation of the $\mathcal{J}(\alpha, \mathcal{Y})$ coincides with evidence lower bound (ELBO):
\[\label{eq:ELBO}
    \underbrace{\log \bZ(\mathcal{Y})}_{\text{Log-likelihood}} \geq -\underbrace{\mathcal{J}(\alpha, \mathcal{Y})}_{\text{ELBO}},
\]
\end{proposition}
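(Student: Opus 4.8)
The plan is to recast the control problem as a change of measure on path space and then invoke the non-negativity of the Kullback--Leibler divergence. Let $\bbP$ be the path measure on $C([0,T];\bbR^d)$ induced by the uncontrolled prior SDE \eqref{eq:prior dynamics}, and let $\bbQ^\alpha$ be the path measure induced by the controlled SDE \eqref{eq:controlled dynamics}, both initialized at $\bX_0\sim p_0$. The crucial structural fact is that the two SDEs share the \emph{same} diffusion coefficient $\sigma(t)$ and differ only through the drift by the additive term $\sigma(t)\alpha(t,\bX_t)$; consequently $\bbQ^\alpha$ and $\bbP$ are mutually absolutely continuous and a Girsanov change of measure applies.

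First I would record the Radon--Nikodym derivative supplied by Girsanov's theorem,
\begin{equation*}
\frac{d\bbQ^\alpha}{d\bbP}(\bX)=\exp\left(\int_0^T \alpha(t,\bX_t)\tr\,d\bW_t-\tfrac12\int_0^T\norm{\alpha(t,\bX_t)}^2\,dt\right),
\end{equation*}
and then take the expectation of its logarithm under $\bbQ^\alpha$. Because the stochastic-integral term is a mean-zero martingale under $\bbQ^\alpha$, the path-space divergence collapses to the control-energy cost, $\KL(\bbQ^\alpha\,\|\,\bbP)=\bbE_{\bbQ^\alpha}[\tfrac12\int_0^T\norm{\alpha_t}^2\,dt]$, which is precisely the first term of the cost-to-go in \eqref{eq:cost-to-go function}.

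Next I would introduce the posterior path measure $\bbP^\star$ defined by the tilting $\tfrac{d\bbP^\star}{d\bbP}(\bX)=\bZ(\mathcal{Y})^{-1}\prod_{t\in\mathcal{T}}g(\by_t\mid\bX_t)$, which reproduces the posterior \eqref{eq:posterior distribution} and whose normalizer is exactly the evidence $\bZ(\mathcal{Y})$. Chaining the derivatives through $\tfrac{d\bbQ^\alpha}{d\bbP^\star}=\tfrac{d\bbQ^\alpha}{d\bbP}\,\tfrac{d\bbP}{d\bbP^\star}$, taking logarithms, and integrating against $\bbQ^\alpha$ produces the key identity
\begin{equation*}
\KL(\bbQ^\alpha\,\|\,\bbP^\star)=\bbE_{\bbQ^\alpha}\!\left[\tfrac12\int_0^T\norm{\alpha_t}^2\,dt-\sum_{t\in\mathcal{T}}\log g(\by_t\mid\bX_t)\right]+\log\bZ(\mathcal{Y})=\mathcal{J}(\alpha,\mathcal{Y})+\log\bZ(\mathcal{Y}).
\end{equation*}
Since $\KL(\bbQ^\alpha\,\|\,\bbP^\star)\ge 0$ for every admissible control, rearranging yields $\log\bZ(\mathcal{Y})\ge-\mathcal{J}(\alpha,\mathcal{Y})$, which is exactly \eqref{eq:ELBO}; moreover equality holds iff $\bbQ^\alpha=\bbP^\star$, i.e. when $\alpha$ is optimal and steers the prior dynamics onto the posterior, which simultaneously characterizes the tightest bound.

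The step I expect to be the main obstacle is making the Girsanov argument rigorous rather than merely formal. Specifically, I must restrict to admissible controls satisfying an integrability (Novikov-type) condition such as $\bbE_{\bbP}[\exp(\tfrac12\int_0^T\norm{\alpha_t}^2\,dt)]<\infty$, or else deploy a localization/stopping-time argument, so that $\tfrac{d\bbQ^\alpha}{d\bbP}$ is a genuine probability density and the stochastic integral is a true (not merely local) martingale with vanishing mean. I also need the discrete-time likelihood functional $\prod_{t\in\mathcal{T}}g(\by_t\mid\bX_t)$ to be $\bbP$-integrable so that $0<\bZ(\mathcal{Y})<\infty$ and $\bbP^\star$ is well defined; under the mild assumption that each $g(\by_t\mid\cdot)$ is bounded (or has adequate moments) this is immediate, since the likelihood terms act only at the finitely many times in $\mathcal{T}$. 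Once these conditions are secured, the remainder is routine manipulation of Radon--Nikodym derivatives together with the definition of $\mathcal{J}$.
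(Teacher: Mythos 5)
Your proof is correct, and it reaches the bound by the dual route to the one the paper takes. The paper's proof starts from $\log \bZ(\mathcal{Y}) = \log \bbE_{\bX \sim \eqref{eq:prior dynamics}}\left[ p(\mathcal{Y} \mid \bX_{[0,T]}) \right]$, reweights this expectation onto the controlled process, applies Jensen's inequality, and then uses Girsanov (\cref{theorem:girsanov}) together with the vanishing of a stochastic integral to evaluate the log-ratio $\log \bigl( p(\bX_{[0,T]}) / p(\bX^{\alpha}_{[0,T]}) \bigr)$; the posterior never appears as a measure in that argument. You instead construct the tilted path measure $\bbP^\star$, chain Radon--Nikodym derivatives, and invoke $\KL(\bbQ^\alpha \,\|\, \bbP^\star) \geq 0$. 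The two convexity steps are the same fact in disguise (Jensen for $-\log$ \emph{is} non-negativity of the KL divergence), and both proofs lean on the identical Girsanov-plus-martingale computation; but your packaging buys something the paper's does not state: the exact identity $\log \bZ(\mathcal{Y}) = -\mathcal{J}(\alpha, \mathcal{Y}) + \KL(\bbQ^\alpha \,\|\, \bbP^\star)$, hence the characterization that the bound is tight iff the control transports the prior dynamics onto the posterior --- a claim the paper asserts only informally after \cref{proposition:ELBO}. Your admissibility concerns also mirror the paper's own hypotheses: its statement of Girsanov already assumes Novikov's condition.

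One imprecision to repair, though it is local and does not break the proof: in computing $\KL(\bbQ^\alpha \,\|\, \bbP)$ you assert that the stochastic integral $\int_0^T \alpha_t\tr d\bW_t$ appearing in $\log \tfrac{d\bbQ^\alpha}{d\bbP}$ is a mean-zero martingale under $\bbQ^\alpha$. Read literally this yields $\KL(\bbQ^\alpha \,\|\, \bbP) = -\tfrac{1}{2}\bbE_{\bbQ^\alpha}\bigl[\int_0^T \norm{\alpha_t}^2 dt\bigr]$, i.e.\ a negative KL, because $\bW$ is a $\bbP$-Brownian motion, not a $\bbQ^\alpha$-one. The correct step substitutes $d\bW_t = \alpha_t\, dt + d\tilde{\bW}_t$ with $\tilde{\bW}$ the $\bbQ^\alpha$-Brownian motion; it is the $\tilde{\bW}$-integral that is mean-zero under $\bbQ^\alpha$, and the substitution flips the coefficient $-\tfrac{1}{2}$ into $+\tfrac{1}{2}$, giving your stated (correct) formula. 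The paper's own step $(iii)$ is loose at exactly the same point, so this is a shared blemish rather than a gap in your argument.
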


\textbf{Amortized Inference. } \cref{proposition:ELBO} establishes that solving the SOC optimization problem with the cost function~\eqref{eq:cost-to-go function} can be interpreted as a variational inference problem for the posterior distribution in~\eqref{eq:posterior distribution}. It aligns with continuous-time reinforcement learning with entropy regularization~\citep{todorov2006linearly}, where the integral term $\frac{1}{2}\norm{\alpha_t}^2$ enforces KL-regularization to maintain proximity to the prior process~\eqref{eq:prior dynamics} and $-\log g$ acts as reward function. Hence, once the optimal policy $\alpha^{\star}$ (the minimizer of the SOC problem) is obtained, we can sample from the posterior distribution~\eqref{eq:posterior distribution} over the given time interval by simulating the \textit{optimally controlled} SDE~\eqref{eq:controlled dynamics} and the conditional distribution in~\eqref{eq:full trajectory estimation} also can be approximated as follows:
\[\label{eq:reconstruction_with_post}
    p(\mathcal{Y} | \mathcal{Y}_{\text{obs}}) &= \int p(\mathcal{Y} | \bX_{[0, T]}) p(\bX_{[0, T]} | \mathcal{Y}_{\text{obs}}) d\bX_{[0,  T]} \\
    & = \int p(\mathcal{Y} | \bX_{[0, T]}) p(\bX^{\alpha^{\star}}_{[0, T]})d\bX^{\alpha^{\star}}_{[0,  T]}, \label{eq:reconstruction}
\]
where $p(\bX^{\alpha^{\star}}_{[0, T]})$ represents the collection of marginal distributions of the controlled SDEs in~\eqref{eq:controlled dynamics} with $\alpha^{\star}$.

In practice, we can approximate the optimal control by parameterizing the control policy $\alpha(t, \bx) := \alpha(t, \bx, \theta)$ with a neural network and optimizing the cost function~\eqref{eq:cost-to-go function} using gradient descent. However, in this case, we require caching the gradients across the entire time interval, which becomes computationally expensive and memory-intensive as the time horizon or latent dimension increases~\citep{liu2024generalized, park2024stochastic}. Additionally, the inference of latent states through SDE simulations often requires numerical solvers like Euler-Maruyama solvers~\citep{kloeden2013numerical}, thereby substantially increasing resource demands.

\textbf{Locally Linear Approximation.} To overcome these challenges, we propose an efficient approximation inspired by~\citep{becker2019recurrent, schirmer2022modeling, park2024amortized}. This method (locally) linearizes the drift function in~\eqref{eq:controlled dynamics} using an attentive mechanism to leverage observations $\mathcal{Y}$. 
It enables the derivation of a closed-form solution for the SDE, facilitating efficient sampling of latent states without relying on numerical simulation.
\begin{theorem}[Simulation-free inference]\label{theorem:simulation free inference} Let us consider a sequence of semi-positive definite (SPD) matrices $\bD_{t \in \mathcal{T}}$ where each $\bD_{t_i} \in \bbR^{d \times d}$ admits the eigen-decomposition $\bD_{t_i} = \bV \bLambda_{t_i} \bV^{\top}$ with eigen-basis $\bV \in \bbR^{d \times d}$ and eigen-values $\bLambda_{t_i} \in \text{diag}(\bbR^d)$ for all $i \in \{1, \cdots, k\}$ and time-state invariant approximation of controls $\alpha_{t \in \mathcal{T}}$, where each $\alpha_{t} \in \bbR^d$. Then, for an interval $[t_{i}, t_{i-1})$, consider the SDE:
\[\label{eq:linearized controlled SDE}
d\bX^{\alpha}_t = \left[-\bD_{t_i} \bX^{\alpha}_t + \alpha_{t_i} \right] dt + d\bW_t, 
\]
where $\bX^{\alpha}_0 \sim \mathcal{N}(\mu_0, \Sigma_0)$. Then, for any time-stamps $t_i \in \mathcal{T}$, the marginal distribution of the solution of~\eqref{eq:linearized controlled SDE} is a Gaussian distribution $\ie \bX^{\alpha}_{t_i} \sim \mathcal{N}(\mu_{t_i}, \Sigma_{t_i})$ whose the parameters are computed as
\[
    &\mu_{t_i} = \bV \Bigg( e^{-\sum_{j=0}^{i-1} (t_{j+1} - t_{j}) \bLambda_{t_j}} \hat{\mu}_{t_0} - \\ \nonumber
    & \sum_{l=0}^{i-1} e^{-\sum_{j=l}^{i-1} (t_{j+1} - t_{j}) \bLambda_{t_j}} \bLambda^{-1}_{t_l} \left( \bI - e^{(t_{l+1} - t_l) \bLambda_{t_l}} \right) \hat{\alpha}_{t_{l}} \Bigg) \\
    &\Sigma_{t_i} = \bV \Bigg( e^{-2\sum_{j=0}^{i-1} (t_{j+1} - t_{j}) \bLambda_{t_j}} \hat{\Sigma}_{t_0} -  \\ \nonumber
    & \frac{1}{2}\sum_{l=0}^{i-1} e^{-2\sum_{j=l}^{i-1} (t_{j+1} - t_{j}) \bLambda_{t_j}} \bLambda^{-1}_{t_l} \left( \bI - e^{2(t_{l+1} - t_l) \bLambda_{t_l}} \right) \Bigg)\bV^{\top}.
\]
\end{theorem}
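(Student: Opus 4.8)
The SDE in~\eqref{eq:linearized controlled SDE} is a time-inhomogeneous linear (Ornstein--Uhlenbeck--type) SDE whose drift matrix $\bD_{t_i}$ and offset $\alpha_{t_i}$ are \emph{constant} on each sub-interval $[t_l, t_{l+1})$. The plan is to (i) rotate into the common eigenbasis $\bV$ so the dynamics decouple into $d$ independent scalar OU processes, (ii) solve one scalar OU process in closed form over a single interval, (iii) compose these linear--Gaussian transitions by induction across $t_0 < t_1 < \cdots < t_i$, and (iv) rotate back. Throughout, write $\Delta_l := t_{l+1}-t_l$ and let $\hat{\bX}_t := \bV^{\top}\bX^{\alpha}_t$, $\hat\alpha_{t_l} := \bV^{\top}\alpha_{t_l}$, $\hat\mu_{t_0} := \bV^{\top}\mu_0$, $\hat\Sigma_{t_0} := \bV^{\top}\Sigma_0\bV$ denote eigenbasis coordinates.

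\textbf{Diagonalization.}
Since $\bV$ is orthonormal (the shared eigenbasis of the SPD matrices, so $\bV^{\top}=\bV^{-1}$), the process $\tilde{\bW}_t := \bV^{\top}\bW_t$ is again a standard Wiener process, because an orthogonal transformation preserves the covariance of Brownian increments. Substituting $\bX^{\alpha}_t = \bV\hat{\bX}_t$ into~\eqref{eq:linearized controlled SDE} and using $\bD_{t_i} = \bV\bLambda_{t_i}\bV^{\top}$ yields, on each interval,
\[
d\hat{\bX}_t = \left[-\bLambda_{t_i}\hat{\bX}_t + \hat\alpha_{t_i}\right]dt + d\tilde{\bW}_t.
\]
Because $\bLambda_{t_i}$ is diagonal, the system fully decouples into $d$ scalar equations $dZ^{(m)}_t = (-\lambda^{(m)}_{t_i}Z^{(m)}_t + \hat\alpha^{(m)}_{t_i})\,dt + d\tilde W^{(m)}_t$, which I would then treat one coordinate at a time.

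\textbf{Single-interval solution.}
On $[t_l, t_{l+1})$ I would solve each scalar OU equation by the integrating-factor method: applying It\^o's formula to $e^{\lambda t}Z^{(m)}_t$ (with $\lambda := \lambda^{(m)}_{t_l}$) and integrating gives the affine--Gaussian transition
\[
Z^{(m)}_{t_{l+1}} = e^{-\lambda \Delta_l}Z^{(m)}_{t_l} + \frac{\hat\alpha^{(m)}_{t_l}}{\lambda}\!\left(1-e^{-\lambda\Delta_l}\right) + \xi,
\]
where $\xi$ is zero-mean Gaussian with variance $\tfrac{1}{2\lambda}(1-e^{-2\lambda\Delta_l})$, obtained from the It\^o isometry applied to $\int e^{-\lambda(t_{l+1}-s)}d\tilde W^{(m)}_s$. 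Since the transition is affine in $Z^{(m)}_{t_l}$ with additive Gaussian noise and the initial law is Gaussian, every marginal remains Gaussian, establishing the Gaussianity claim. In matrix form the one-step recursions read $\hat\mu_{t_{l+1}} = e^{-\Delta_l\bLambda_{t_l}}\hat\mu_{t_l} + \bLambda^{-1}_{t_l}(\bI - e^{-\Delta_l\bLambda_{t_l}})\hat\alpha_{t_l}$ and $\hat\Sigma_{t_{l+1}} = e^{-2\Delta_l\bLambda_{t_l}}\hat\Sigma_{t_l} + \tfrac{1}{2}\bLambda^{-1}_{t_l}(\bI - e^{-2\Delta_l\bLambda_{t_l}})$, all matrices being diagonal.

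\textbf{Composition and rotation back.}
Unrolling the mean recursion from $t_0$ to $t_i$ telescopes the decay factors into the product $e^{-\sum_{j=0}^{i-1}\Delta_j\bLambda_{t_j}}$ acting on $\hat\mu_{t_0}$, plus a sum over $l$ in which the affine term injected over $[t_l,t_{l+1})$ is subsequently attenuated by the remaining decay factors $e^{-\sum_{j=l+1}^{i-1}\Delta_j\bLambda_{t_j}}$; the same bookkeeping with squared decay factors handles the covariance. Rotating back through $\mu_{t_i}=\bV\hat\mu_{t_i}$ and $\Sigma_{t_i}=\bV\hat\Sigma_{t_i}\bV^{\top}$, and absorbing the leftover factor $e^{-\Delta_l\bLambda_{t_l}}$ into $\bLambda^{-1}_{t_l}(\bI-e^{-\Delta_l\bLambda_{t_l}})$ via the elementary identity $-e^{-c}\lambda^{-1}(1-e^{c})=\lambda^{-1}(1-e^{-c})$, reproduces the stated expressions (which regroup the sum from $j=l$ with a $+\Delta_l\bLambda_{t_l}$ exponent). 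I expect the main obstacle to be purely organizational: carefully tracking the nested, interval-dependent exponentials through the induction. The one genuine subtlety to flag is that the factor $e^{-2\Delta_l\bLambda_{t_l}}$ multiplying $\hat\Sigma$ encodes the two-sided conjugation $e^{-\Delta_l\bLambda_{t_l}}\hat\Sigma\,e^{-\Delta_l\bLambda_{t_l}}$, which collapses to a single scalar exponent precisely because the covariance stays diagonal in the shared eigenbasis $\bV$; this is consistent provided $\Sigma_0$ is diagonal in that basis (equivalently, commutes with every $\bD_{t_i}$), and the inductive step preserves it since the injected noise covariance $\tfrac{1}{2}\bLambda^{-1}_{t_l}(\bI-e^{-2\Delta_l\bLambda_{t_l}})$ is itself diagonal.
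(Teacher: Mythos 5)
Your proposal is correct and follows essentially the same route as the paper's own proof: rotate into the shared eigenbasis via $\bV$ (noting the rotated Wiener process stays standard), solve the constant-coefficient OU dynamics on each sub-interval via the integrating factor and It\^o isometry, unroll the resulting affine--Gaussian recursions across $t_0 < \cdots < t_i$, and rotate back, with your coordinate-wise scalar decoupling being only a cosmetic difference from the paper's diagonal-matrix bookkeeping. The subtlety you flag at the end is real and is in fact glossed over by the paper: its covariance step pulls the factor $e^{-2(t-t_i)\bLambda_{t_i}}$ outside the outer-product expectation as though it commuted with $\hat{\Sigma}_{t_i}$, which, exactly as you observe, is only valid when $\hat{\Sigma}_{t_0} = \bV^{\top}\Sigma_0\bV$ is diagonal (equivalently $\Sigma_0$ commutes with every $\bD_{t_i}$) so that diagonality propagates through the recursion --- so your version is, if anything, the more careful one.
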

\cref{theorem:simulation free inference} states that by approximating the drift function in~\eqref{eq:controlled dynamics} using a linear-affine formulation with $f(t, \bx) := -\bD_{t_i} \bx$ and $\alpha(t, \bx) \approx \alpha_{t_i}$, we achieve a \textit{simulation-free property}. Therefore, with the given matrices and controls $\{\bD_{t}, \alpha_{t}\}_{t \in \mathcal{T}}$, we can compute a closed-form solution for the latent states $\bX^{\alpha}_{t}$, which in turn allows us to infer the intermediate observations $\by_{t}$ for any time $t \in \mathcal{T}$. To ensure the latent dynamics align with observations $\mathcal{H}$, we parameterize the matrices and controls $\{\bD_{t}, \alpha_{t}\}_{t \in \mathcal{T}}$ as follow:
\[\label{eq:drift approximation}
    \bD_t = \sum_{l=1}^L w_t^l \bD^l, \quad \bw_t = w_{\theta}(\bz_t), \quad \alpha_t = \bB_{\theta}\bz_t,
\]
where the latent (auxiliary) variables $\mathcal{Z} := \bz_{t \in \mathcal{T}}$ are generated by the parameterized encoder network: 
\[\label{eq:encoder network}
     q_{\theta}(\mathcal{Z} | \mathcal{Y}) : = \prod_{t \in \mathcal{T}} q_{\theta}(\bz_t | \mathcal{Y}) = \mathcal{N}(\bz_t | \bT_{\theta}(t, \mathcal{Y}), \sigma^2_q \bI),
\]
with the transformer network $\bT_{\theta}$. This locally linear parameterization increases flexibility by integrating the given observations $\mathcal{Y}$ through an attentive structure, ensuring that $\bD_t$ and $\alpha_t$ remain constant within observed intervals $[t_i, t_{i-1})$ for all $i \in [1, \dots, k]$, allowing the dynamics to smoothly transition between adapted linear states. Furthermore, this parameterization allows integration with the parallel scan algorithm~\citep{blelloch1990prefix}, enabling parallel computation of both moments for the $k$ latent states $\{\mu_{t}, \Sigma_{t}\}_{t \in \mathcal{T}}$. It reduces the computational complexity of the posterior distribution in \eqref{eq:posterior distribution} from $\mathcal{O}(k)$ to $\mathcal{O}(\log k)$\footnote{See details on~\cref{sec:parallel_scan_algorithm}.}.

\subsection{Representation Learning with Amortized Control} \label{sec:main:method:subsection:universal features}

In the previous section, we introduced an efficient and scalable approach for approximating the posterior distribution~\eqref{eq:posterior distribution} via amortized inference, leveraging SOC theory. Unlike Bayesian recursion~\eqref{eq:bayesian updates}, which incorporates observational information into the latent dynamics through iterative updates, our method employs the optimal control $\alpha^{\star}$ to encapsulate the dynamics of the underlying time-series. This optimal control encodes key features that effectively capture the spatio-temporal representation of the observations $\mathcal{Y}$.
Therefore, we aggregate the sequence of control signals $\alpha_{t  \in \mathcal{T}}$ into a \textit{universal feature} $\bbA$, which serve as the transferable feature for downstream tasks $\ie \bbA = f(\alpha_{t  \in \mathcal{T}})$. 

\textbf{Masked Auto Encoder. }To construct a robust representation of $\bbA$ (or control signals $\alpha_{t \in \mathcal{T}}$) within our control framework outlined in~\cref{proposition:ELBO}, we focus on general reconstruction tasks. Given the complete observation set $\mathcal{Y}_{\text{obs}} = \mathcal{Y}_{\text{tar}} \cup \mathcal{Y}_{\text{ctx}}$, we generate masked targets $\mathcal{Y}_{\text{tar}}$ using contextual observations $\mathcal{Y}_{\text{ctx}}$. Building on~\eqref{eq:reconstruction}, this reconstruction problem can expressed as the estimation of the conditional distribution of $\mathcal{Y}_{\text{tar}}$ given $\mathcal{Y}_{\text{ctx}}$ as follows:
\[\label{eq: MAE_1}
    & p(\mathcal{Y}_{\text{tar}} | \mathcal{Y}_{\text{ctx}}) = \int p(\mathcal{Y}_{\text{tar}} | \bX_{[0, T]}) p(\bX^{\alpha^{\star}}_{[0, T]})d\bX^{\alpha^{\star}}_{[0,  T]}. 
\]
In this formulation, the optimal control policy $\alpha^{\star}$ is determined by solving SOC problem with objective function:
\[\label{eq: MAE_2}
    & - \log p(\mathcal{Y}_{\text{tar}} | \mathcal{Y}_{\text{ctx}}) \leq \mathcal{J}(\alpha, \mathcal{Y}_{\text{ctx}}) \\
    \label{eq: MAE_3}
    & = \bbE_{\bX^{\theta} \sim \eqref{eq:linearized controlled SDE}} \left[ \int_0^T \frac{1}{2}\norm{\alpha^{\theta}_t}^2 dt - \sum_{t \in \mathcal{T}_{\text{obs}}} \log g_{\psi}(\by_t | \bX^{\theta}_t) \right], \nonumber
\]
where we denote $\alpha^{\theta}_t := \alpha^{\text{ctx}, \theta}_t$ and $\bX^{\alpha^{\text{ctx}, \theta}}_t := \bX^{\theta}_t$ for brevity. Here, the control $\alpha^{\theta}_t$ is generated by encoding the context observations $\mathcal{Y}_{\text{ctx}}$ using a neural network $\theta$, as detailed in~(\ref{eq:drift approximation}$-$\ref{eq:encoder network}). This control problem aligns with the masked auto-encoder (MAE) framework commonly used in SSL~\citep{he2022masked}, particularly within the context of SSMs for time-series data. However, this approach may be suboptimal for highly noisy data modalities like fMRI as the naïve likelihood function $g_{\psi}(\by_t | \bX^{\theta}_t)$ directly fitting the latent states $\bX^{\theta}$ to the observed raw-signals $\by_t$. It can cause $\bX^{\theta}$ to overfit or fail to capture semantically meaningful features~\citep{assran2023self,dong2024brain}, thereby compromising the robustness of universal feature $\bbA$.

\textbf{Integrating Empirical Priors. }
To address the aforementioned issue, we introduce additional structure into the likelihood by modeling it as a mixture over an auxiliary variable $\bz_t$, formulated as
\[
    g_{\psi}(\by_t|\bX^{\theta}_t) = \int \gamma_{\psi}(\by_t | \bz_t) \pi(\bz_t | \bX^{\theta}_t) d\bz_t,
\]
where $\gamma_{\psi}$ is parameterized likelihood function:
\[
\gamma_{\psi}(\by_t|\bz_t) = \mathcal{N}(\by_t | \bD_{\psi}(\bz_t), \sigma^2_\gamma \bI)
\] with decoder network $\bD_{\psi} : \bbR^d \to \bbR^n$, maps the latent states $\bX^{\theta}_t$ to the output reconstruction $\by_t$ over $t \in \mathcal{T}_{\text{obs}}$.
Here, the mixing distribution $\pi(\bz_t | \bX^{\theta}_t)$ serves to predict the auxiliary variable $\bz_t$ from the latent states $\bX_t^\theta$, capturing high-level structural information in an abstract space. The emission probability $g_\psi(\by_t | \bz_t)$ then refines these predictions by encoding local variations and details.
This formulation naturally aligns with the hierarchical nature of many dynamical systems, where global structures emerge at a higher level of abstraction, while local variations manifest in finer-scale details. By structuring the generative process in this way, we ensure that the control policy $\alpha$ interacts with a well-structured latent space, facilitating more robust learning and better generalization.

The choice of the distribution $\pi$ is pivotal in ensuring the auxiliary variable $\bz_t$ remains meaningful and effectively supports the training objective function for the control $\alpha_t$. Here, we define the $\pi$ as a geometric mixture,
\[\label{eq:mixture distribution}
    \pi_{\bar{\theta}}(\bz_t | \bX^{\theta}_t) \propto p(\bz_t | \bX^{\theta}_t)^{\lambda} q_{\bar{\theta}}(\bz_t | \mathcal{Y}_{\text{tar}})^{(1-\lambda)},
\]
where $p(\bz_t | \bX^{\theta}_t) = \mathcal{N}(\bz_t | \bX^{\theta}_t, \sigma^2_p \bI)$ represents the \textit{context-driven} likelihood of $\bz_t$ given the $\bX^{\theta}_t$, which is constructed based on the information of $\mathcal{Y}_{\text{ctx}}$, delivering context-informed features to $\bz_t$ from $\bX_t^\theta$. Conversely, $q_{\bar{\theta}}(\bz_t | \mathcal{Y}_{\text{tar}}) = \mathcal{N}(\bz_t | \bT_{\bar{\theta}}(t, \mathcal{Y}), \sigma^2_q \bI)$ encapsulated a \textit{data-driven} prior knowledge derived from $\mathcal{Y}_{\text{tar}}$. We define the data-driven prior $q_{\bar{\theta}}$ using the same parameterization as encoder network $q_{\theta}$ in~\eqref{eq:encoder network}:
\[\nonumber
    q_{\bar{\theta}}(\mathcal{Z}_{\text{tar}} | \mathcal{Y}_{\text{tar}}) = \prod_{t \in \mathcal{T}_{\text{tar}}}q_{\bar{\theta}}(\bz_t | \mathcal{Y}_{\text{tar}}) = \mathcal{N}(\bz_t | \bT_{\bar{\theta}}(t, \mathcal{Y}), \sigma^2_q \bI),
\]
where $\bar{\theta}$ is a frozen copy of $\theta$ that is updated at a slower rate than $\theta$. The empirical prior encourages the auxiliary variable $\bz_t$ predicted from the current context $\mathcal{Y}_\text{ctx}$ to align with the one directly encoded from the target $\mathcal{Y}_\text{tar}$ using the slow-moving encoder $\bar{\theta}$. This design ensures that the encoder $q_\theta$ captures more abstract and invariant features, mitigating the risk of overfitting to the target signals. The balancing factor $0 < \lambda \leq 1$ allows $\bz_t$ to adjust the influence of contextual information (contained in $\bX^{\theta}_t$) and empirical priors (contained in $\mathcal{Y}_{\text{tar}}$ and encoder $\bar{\theta}$). Compared to the case where $\lambda=1$, where the model learns target information solely by reconstructing target signals $\mathcal{Y}_{\text{tar}}$, thereby implicitly embedding this information into the auxiliary state $\bz_t$ through learning, choosing $\lambda < 1$ allows the targe information to be explicitly injected into the $\bz_t$.

\textbf{Training Objective.} By incorporating the mixture distribution~\eqref{eq:mixture distribution} into the SOC problem in~\eqref{eq: MAE_2}, the ELBO is:
\[
    & - \log p(\mathcal{Y}_{\text{tar}} | \mathcal{Y}_{\text{ctx}}) \leq \bbE_{\bX^{\theta} \sim \eqref{eq:linearized controlled SDE}} \Bigg[ \int_0^T \frac{1}{2}\norm{\alpha^{\theta}_t}^2 dt -  \nonumber\\
    & \sum_{t \in \mathcal{T}_{\text{obs}}} \bbE_{p(\bz_t | \bX^{\theta}_t)} \bigg(\underbrace{\log \gamma_{\psi}(\by_t | \bz_t)}_{\text{reconstruction}} + \underbrace{\log q_{\bar{\theta}}(\bz_t | \mathcal{Y}_{\text{tar}})^{(1-\lambda)} }_{\text{regularization}}\bigg) \Bigg] \nonumber\\
    & := \mathcal{L}(\theta, \psi),\label{eq:training objective}
\]
where the reconstruction term $\log \gamma_{\psi}(\by_t | \bz_t)$ ensures accurate reconstruction of the target signals from the auxiliary variables, and the regularization term $\log q_{\bar{\theta}}(\bz_t | \mathcal{Y}_{\text{tar}})$ incorporates prior knowledge of $\mathcal{Y}_{\text{tar}}$ to prevent the context-driven auxiliary variables from overfitting to the target data. This facilitates the capture of invariant and semantically rich features, aligning with the principles of Joint Embedding Predictive Architecture (JEPA)~\citep{lecun2022path, assran2023self}, which emphasizes the integration of predictive and contextual information to develop robust and interpretable latent representations. Additionally, inspired by prior work on SSL~\citep{caron2021emerging, chen2021empirical, assran2023self}, the parameter of data-driven prior $\bar{\theta}$ updated via an exponential moving average of the encoder network parameters $\theta$. This formulation ensures smoother evolution of the target encoder, preventing abrupt changes and promoting stable and consistent representation learning. 

The parameters of encoder-decoder $\{\theta, \psi\}$ along with those governing the latent dynamics $\{w_{\theta}, \bB_{\theta}, \mu_0, \Sigma_0, \{\bD^l\}_{l=1}^L\}$ are jointly optimized by minimizing the rescaled training objective function $\mathcal{L}(\theta, \psi)$ described in~\eqref{eq:rescaled training objective}, for stable learning, in an end-to-end manner. A more detailed objective function is described in Appendix~\ref{sec:app:deriving elbo}. After training, we obtain the desired universal feature $\bbA$ by computing $\alpha_t = \bB_{\theta} \bo_t$ as described in~\eqref{eq:drift approximation}. The extracted feature $\bbA$ is then utilized for downstream tasks. The training process is outlined in the Algorithm~\ref{algorithm:pretrain} in the Appendix.

\section{Experiments}
\label{sec:main:experiment}

In this section, we present empirical results that demonstrate the effectiveness of BDO as a brain dynamics foundation model. BDO was pre-trained using the large-scale UK Biobank (UKB) dataset in a self-supervised manner, leveraging resting-state fMRI recordings and medical records from 41,072 participants~\citep{alfaro2018image}. To evaluate its applicability, we conducted experiments across various downstream tasks, including demographics prediction, trait prediction, and psychiatric diagnosis classification. These experiments were performed on five datasets: Human Connectome Project in Aging (HCP-A; \citealp{bookheimer2019lifespan}), Autism Brain Imaging Data Exchange (ABIDE; \citealp{di2014autism}), Attention Deficit Hyperactivity Disorder 200 (ADHD200; \citealp{brown2012adhd}), Human Connectome Project for Early Psychosis (HCP-EP; \citealp{jacobs2024introduction,Prunier2021-ao}), and Transdiagnostic Connectome Project (TCP; \citealp{chopra2024transdiagnostic}). All fMRI data in the experiments was preprocessed by dividing brain activity into 450 distinct ROIs, using Schaefer-400 for cortical regions and Tian-Scale III for subcortical areas~\citep{10.1093/cercor/bhx179, Tian2020}.

To evaluate the effectiveness of \textbf{BDO}, we compared our performance against both training-from-scratch (TFS) models and foundation models. Specifically, we compared BDO with three deep learning architectures: BrainNetCNN~\citep{kawahara2017brainnetcnn}, BrainGNN~\citep{li2021braingnn}, and BrainNetTF~\citep{kan2022brain}, as well as two foundation models for brain dynamics: BrainLM~\citep{caro2024brainlm} and BrainJEPA~\citep{dong2024brain}. We denote \(\text{BDO}_{\texttt{LP}}\) as the linear probing (LP) performance of the pre-trained BDO, where the encoder remains frozen and a linear head is trained on top for downstream tasks. In contrast, \( \text{BDO}_{\texttt{FT}} \) represents the fine-tuned (FT) performance, where the entire model, including the pre-trained encoder, is updated during task-specific training. Note that the reported performance for both BrainLM and BrainJEPA is based solely on fine-tuning. For a fair comparison, all results are averaged over three runs with different data splits. The best-performing results are highlighted in \BF{bold}, while the second-best results are shown in \blue{blue} for clarity. Additional experimental details are provided in Appendix~\ref{sec:app:details}.

\begin{table}
\centering
\scriptsize
  \caption{Internal prediction tasks on UKB 20\% held-out.}
  \vspace{-2mm}
  \label{tab:internal}
    \begin{tabular}{l cc cc}
        
        \toprule
        \multirow{2}{*}{Methods} 
        & \multicolumn{2}{c}{Age} 
        & \multicolumn{2}{c}{Gender} 
        \\
        \cmidrule(lr){2-3} \cmidrule(lr){4-5} 
        
        & MSE $\downarrow$ & $\rho$ $\uparrow$ 
        & ACC (\%) $\uparrow$ & F1 (\%) $\uparrow$ 
        \\
        \midrule

        BrainNetCNN
        & 0.648 \PM .018 & 0.621 \PM .012
        & 90.89 \PM 0.14 & 90.87 \PM 0.12
        \\
        
        BrainGNN
        & 0.914 \PM .024 & 0.430 \PM .010
        & 79.07 \PM 1.08 & 79.03 \PM 1.09
        \\

        BrainNetTF
        & 0.561 \PM .004 & 0.673 \PM .003
        & \blue{91.19 \PM 0.51} & \blue{91.17 \PM 0.50}
        \\

        \midrule
        
        BDO$_{\texttt{LP}}$
        & 0.600 \PM .004 & 0.635 \PM .005
        & 88.25 \PM 0.78 & 88.21 \PM 0.79
        \\
        \midrule

        BrainLM
        & 0.649 \PM .008 & 0.618 \PM .005 
        & 89.28 \PM 0.72 & 89.26 \PM 0.71
        \\
                
        BrainLM$^\dag$
        & 0.612 \PM .041 & 0.632 \PM .020
        & 86.47 \PM 0.74 & 86.84 \PM 0.43
        \\
        
        BrainJEPA$^\dag$
        & \blue{0.501 \PM .034} & \blue{0.718 \PM .021}
        & 88.17 \PM 0.06 & 88.58 \PM 0.11
        \\
        \midrule

        BDO$_{\texttt{FT}}$
        & \BF{ 0.481 \PM .010 } & \BF{ 0.722 \PM .007 }
        & \BF{ 92.59 \PM 0.68 } & \BF{ 92.57 \PM 0.69 }
        \\

        \bottomrule
    \end{tabular}
    
    \begin{tablenotes}
        \item {$\dag$ Results from~\citep{dong2024brain};  BrainLM$^{\dag}$ results also included to compare with our reproduced BrainLM results.}
    \end{tablenotes}
\vspace{-6mm}
\end{table}

\footnotetext{Unfortunately, despite following open-source code and available preprocessing pipelines, our BrainJEPA results may have deviated due to potentially \textbf{undocumented data preprocessing}. Consequently, for a fair comparison, it was infeasible to directly reproduce the performance reported in their paper.}

\subsection{Internal and External Evaluation}

\begin{table*}[ht]
\centering
\scriptsize
  \caption{External tasks for demographics and trait prediction on HCP-A.}
  \label{tab:hcpa}
  \centering
    \vspace{-2mm}
    \begin{tabular}{clcccccccc}
        \toprule
        \multirow{2}{*}{} 
        & \multirow{3}{*}{Methods} 
        & \multicolumn{2}{c}{Age} 
        & \multicolumn{2}{c}{Gender} 
        & \multicolumn{2}{c}{Neuroticism} 
        & \multicolumn{2}{c}{Flanker} \\ 
        \cmidrule(lr){3-4} \cmidrule(lr){5-6} \cmidrule(lr){7-8} \cmidrule(lr){9-10}

        &
        & MSE $\downarrow$ & $\rho$ $\uparrow$ 
        & ACC (\%) $\uparrow$ & F1 (\%) $\uparrow$ 
        & MSE $\downarrow$ & $\rho$ $\uparrow$ 
        & MSE $\downarrow$ & $\rho$ $\uparrow$ \\
        \midrule

        \multirow{3}{*}{\shortstack{TFS}}
        & BrainNetCNN 
        & 0.472 \PM .054 & 0.727 \PM .040 
        & 72.36 \PM 3.66 & 71.42 \PM 4.03 
        & 1.039 \PM .093 & 0.076 \PM .094 
        & 1.001 \PM .097 & 0.310 \PM .083 \\

        & BrainGNN
        & 0.570 \PM .050 & 0.657 \PM .031 
        & 66.81 \PM 2.54 & 65.22 \PM 2.14 
        & 1.076 \PM .069 & 0.094 \PM .044 
        & 1.137 \PM .049 & 0.229 \PM .051 \\
        
        & BrainNetTF %
        & 0.389 \PM .038 & 0.780 \PM .036 
        & 75.00 \PM 2.28 & 74.06 \PM 2.78 
        & 1.209 \PM .051 & 0.015 \PM .055 
        & 0.959 \PM .058 & 0.357 \PM .071 \\

        \midrule
        \multirow{3}{*}{LP}
        & BDO$_{\texttt{LP}}$ (5M)
        & 0.594 \PM .040 & 0.635 \PM .031 
        & 64.12 \PM 0.65 & 63.06 \PM 0.47
        & 0.991 \PM .020  & 0.091 \PM .049
        & 0.929 \PM .029 & 0.365 \PM .031 \\        

        & BDO$_{\texttt{LP}}$ (21M)
        & 0.461 \PM .013 & 0.729 \PM .011
        & 70.37 \PM 0.87 & 68.68 \PM 0.78
        & 0.945 \PM .016 & 0.209 \PM .037
        & 0.904 \PM .024 & 0.387 \PM .041 \\

        & BDO$_{\texttt{LP}}$ (85M)
        & 0.404 \PM .010 & 0.768 \PM .008
        & 72.00 \PM 2.95 & 71.30 \PM 2.19
        & 0.986 \PM .023 & 0.131 \PM .037
        & \blue{0.856 \PM .049} & 0.450 \PM .072 \\

        \midrule
        \multirow{5}{*}{FT}
        & BrainLM (86M)
        & 0.340 \PM .019 & 0.818 \PM .012 
        & 72.78 \PM 2.12 & 72.36 \PM 2.22
        & 1.093 \PM .085 & 0.132 \PM .064
        & 0.859 \PM .010 & \blue{0.461 \PM .015} \\

        & BrainLM$^\dag$ (86M)
        & 0.331 \PM .018 & 0.832 \PM .028 
        & 74.39 \PM 1.55 & 77.51 \PM 1.13
        & 0.942 \PM .082 & 0.231 \PM .012
        & 0.971 \PM .054 & 0.318 \PM .048 \\
        
        & BrainJEPA$^\dag$ (86M)
        & \blue{0.298 \PM .017} & \blue{0.844 \PM .030}
        & \BF{81.52 \PM 1.03} & \BF{84.26 \PM 0.82}
        & \blue{0.897 \PM .055} & \BF{0.307 \PM .006}
        & 0.972 \PM .038 & 0.406 \PM .027 \\
        
        \cmidrule(lr){2-10}
        & BDO$_{\texttt{FT}}$ (85M)
        & \BF{0.273 \PM .010} & \BF{0.851 \PM .006}
        & \blue{79.40 \PM 4.07} & \blue{78.98 \PM 4.38}
        & \BF{0.894 \PM .001} & \BF{0.307 \PM .017}
        & \BF{0.847 \PM .037} & \BF{0.464 \PM .072} \\

        \bottomrule
    \end{tabular}
    \begin{tablenotes}
    \item \scriptsize{
        \hspace{4mm} 
        $\dag$ Results from~\citep{dong2024brain}.
        TFS: Training from scratch, LP: Linear probing, FT: Fine-tuning.
        }
    \end{tablenotes}
\vspace{-3mm}
\end{table*}
    
\begin{table*}[ht]
\centering
\scriptsize
  \caption{Psychiatric diagnosis prediction on clinical fMRI datasets.}
  \label{tab:disease}
  \centering
    \vspace{-2mm}
    \begin{tabular}{clcccccccc}
        \toprule
        \multirow{2}{*}{} 
        & \multirow{3}{*}{Methods} 
        & \multicolumn{2}{c}{ABIDE} 
        & \multicolumn{2}{c}{ADHD200} 
        & \multicolumn{2}{c}{HCP-EP}
        & \multicolumn{2}{c}{TCP}
        \\ 
        \cmidrule(lr){3-4} 
        \cmidrule(lr){5-6} 
        \cmidrule(lr){7-8} 
        \cmidrule(lr){9-10}

        &
        & ACC (\%) $\uparrow$ & F1 (\%) $\uparrow$ 
        & ACC (\%) $\uparrow$ & F1 (\%) $\uparrow$ 
        & ACC (\%) $\uparrow$ & F1 (\%) $\uparrow$ 
        & ACC (\%) $\uparrow$ & F1 (\%) $\uparrow$ 
        \\
        
        \midrule

        \multirow{3}{*}{TFS}
        & BrainNetCNN %
        & 64.39 \PM 2.17 & 64.23 \PM 2.27
        & 55.49 \PM 4.39 & 53.62 \PM 5.15
        & 70.29 \PM 6.90 & 58.07 \PM 9.52
        & 56.96 \PM 7.33 & 50.73 \PM 7.59
        \\

        & BrainGNN
        & 56.82 \PM 3.40 & 56.73 \PM 3.43
        & 52.78 \PM 3.27 & 51.59 \PM 2.89
        & \blue{73.14 \PM 6.90} & \blue{65.46 \PM 9.06}
        & 53.04 \PM 2.22 & 48.24 \PM 7.41
        \\
        
        & BrainNetTF %
        & \blue{66.36 \PM 3.66} & \blue{66.30 \PM 3.67} 
        & 54.29 \PM 3.02 & 50.90 \PM 3.18 
        & 71.43 \PM 6.52 & 61.26 \PM 10.32
        & \blue{62.17 \PM 5.60} & \blue{55.41 \PM 5.69}
        \\

        \midrule
        \multirow{3}{*}{LP}
        & BDO$_{\texttt{LP}}$ (5M)
        & 62.42 \PM 2.68 & 62.30 \PM 2.61
        & 59.65 \PM 2.32 & 56.90 \PM 1.70
        & 73.33 \PM 7.50 & 64.35 \PM 13.9
        & 60.14 \PM 3.69 & 42.04 \PM 5.02 \\        

        & BDO$_{\texttt{LP}}$ (21M)
        & 63.79 \PM 1.83 & 63.67 \PM 1.71
        & \blue{61.15 \PM 1.97} & \BF{59.71 \PM 2.66}
        & 71.43 \PM 4.04 & 64.95 \PM 5.07
        & 60.87 \PM 0.00 & 53.68 \PM 1.53 \\

        & BDO$_{\texttt{LP}}$ (85M) %
        & \BF{66.67 \PM 1.13} & \BF{66.58 \PM 1.02}
        & \BF{61.40 \PM 1.97} & \blue{59.52 \PM 2.87}
        & \BF{75.24 \PM 3.56} & \BF{67.23 \PM 6.22}
        & \BF{63.77 \PM 2.05} & \BF{56.88 \PM 3.45}

        \\

        \bottomrule
    \end{tabular}
\vspace{-5mm}
\end{table*}

\BF{Internal tasks: Age and Gender Prediction. }
To assess the generalization capabilities of BDO, we evaluated its performance on a held-out 20$\%$ subset of the UKB dataset, which was excluded from pre-training. The evaluation focused on two tasks: age regression and gender classification. As shown in~\cref{tab:internal}, BDO achieved state-of-the-art performance, surpassing baseline models, including both TFS and foundation models. Improvements were consistent across all evaluation metrics, demonstrating the robustness and transferability of the universal feature $\bbA$ of BDO. %

\BF{External tasks: Trait and Diagnosis Prediction. }
For external validation, we evaluated BDO on both individual trait prediction and psychiatric diagnosis classification tasks using multiple datasets, including HCP-A, ABIDE, ADHD200, HCP-EP, and TCP. The demographics and trait prediction tasks, conducted on the HCP-A dataset, involved predicting individual characteristics such as age, gender, neuroticism, and flanker scores. In ~\cref{tab:hcpa}, BDO exhibited strong transfer learning capabilities, with larger variants achieving superior performance.

BDO also demonstrated strong applicability to psychiatric diagnosis classification across diverse datasets, as detailed in~\cref{tab:disease}. These tasks included autism spectrum disorder (ASD) classification with ABIDE, attention-deficit/hyperactivity disorder (ADHD) classification with ADHD200, psychotic disorder with HCP-EP, and psychiatric disorder with TCP. Across all datasets, BDO consistently outperformed baseline models, achieving superior classification accuracy and F1 scores. These findings highlight the robustness of BDO in modeling complex relationships between brain dynamics and individual traits, as well as its efficacy in psychiatric diagnosis classification.

The LP performance of BDO showcased remarkable scalability and transferability, demonstrating its efficacy as a foundation model for brain dynamics. Notably, on HCP-A, its LP performance was comparable to TFS models, highlighting its ability to generalize across unseen datasets. Impressively, BDO achieved state-of-the-art results in psychiatric diagnosis classification, outperforming existing baselines across multiple clinical datasets.
\begin{figure}[t]
\centering
\includegraphics[width=0.46\textwidth,]{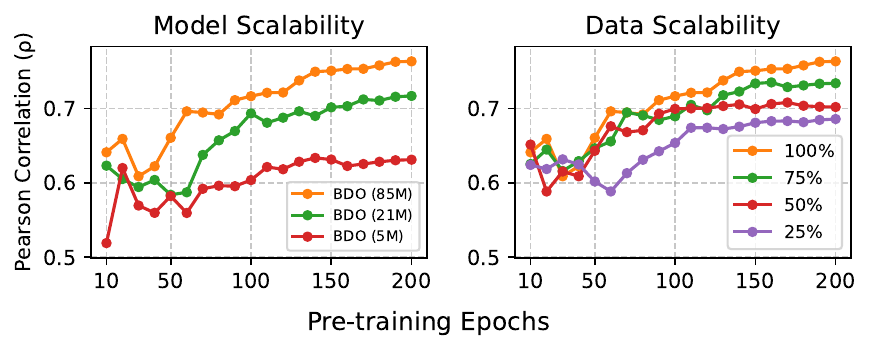}
\vspace{-3mm}
\caption{Scalability results of HCP-A age regression in LP.}\label{fig:scalability}
\vspace{-8mm}
\end{figure}

We believe the outstanding LP performance of BDO comes from its principled modeling of temporal dynamics via SSM, which enables BDO to effectively capture the complex and evolving nature of brain activity. Specifically, our SSM formulation introduces a strong inductive bias for time-series modeling, allowing BDO to learn structured latent representations without relying solely on a data-driven way. This structured design not only facilitates the development of a more efficient model by reducing the number of parameters compared to purely data-driven methods~\citep{caro2024brainlm, dong2024brain} which depend solely on \textit{learning} temporal dependencies, but also enhances the robustness of representation learning for meaningful representations.

\begin{figure*}[t]
\centering
\includegraphics[width=0.9\textwidth,]{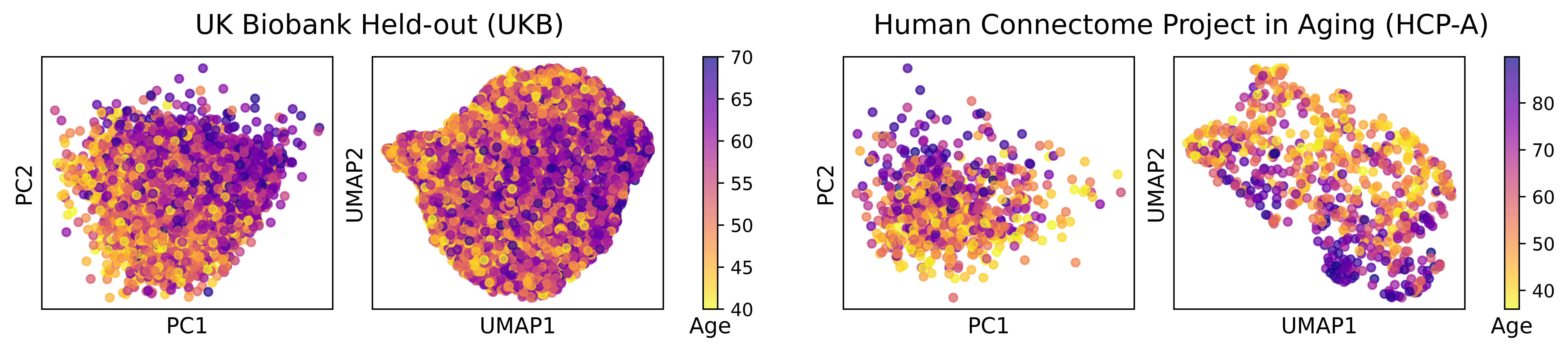}
\vspace{-3mm}
\caption{BDO captures a latent space that encodes clinically relevant information from fMRI recordings. For each fMRI scan, a universal feature $\bbA$ is extracted as a summary representation. The $\bbA$ is then projected into a 2D space using PCA and UMAP. The resulting embedding reveals a structured organization across both internal and external datasets.}\label{fig:interpretability}
\vspace{-4mm}
\end{figure*}

\BF{Interpretability of the universal feature $\bbA$. } In order to evaluate whether the extracted universal feature $\bbA$ effectively encodes critical information related to clinical variables in fMRI recordings, we visualize $\bbA$ by embedding it into a 2D space using PCA and UMAP, as shown in~\cref{fig:interpretability}. PCA reveals a clear linear separation based on age distribution, while UMAP preserves this separation, indicating that the learned representations capture biologically meaningful, age-related variations. Accurate age estimation is vital, as deviations from typical aging trajectories can signal early risks for cognitive and psychiatric conditions~\citep{davatzikos2009longitudinal, han2021brain, elliott2021brain}. In this regard, our results suggest that BDO effectively learns representations that reflect meaningful neural changes related to aging, enhancing its utility for downstream applications.

\subsection{Scalability and Efficiency}

We conducted scalability experiments to assess how model performance evolves with increasing model complexity or data availability. Our analysis focuses on both the benefits of scaling and the trade-offs in runtime and memory usage. 

\BF{Scalability. } To evaluate model scalability, we developed four BDO variants with increasing parameter sizes: Tiny (5M), Small (21M), and Base (85M). As depicted in ~\cref{fig:scalability}, performance trajectories over pre-training epochs indicate that larger models consistently reach higher performance plateaus, highlighting the scalability of BDO. 

Additionally, we examined the effect of pre-training data volume, we trained BDO (85M) on progressively larger subsets (25\%, 50\%, 75\%, and 100\%) of the UKB pre-training dataset. As shown in ~\cref{fig:scalability}, performance improved with dataset size, with the full dataset yielding the best results.

\BF{Efficiency. } As shown in~\cref{fig:scalability_gpu}, BDO significantly outperforms other foundation models in both resource and parameter efficiency. Remarkably, even the smallest BDO variant (5M) achieves performance comparable to other foundation models~\citep{caro2024brainlm, dong2024brain}. A detailed explanation of the underlying factors contributing to the efficiency of BDO is provided in Appendix~\ref{sec:source_efficiency}.

\subsection{Ablation Study}
\label{sec:main:experiment:ablation study}

\BF{Balancing factor $\tau$. } To analyze the impact of the regularization term, we introduce $\tau = \frac{(1-\lambda) \sigma^2_{\gamma}}{\sigma^2_q}$\footnote{See the rescaled objective function in~\eqref{eq:rescaled training objective} in Appendix.}, which represents the weight of the regularization loss. As shown in ~\cref{fig:ablation}, incorporating the regularizer ($\tau > 0$) leads to improved performance compared to the fully reconstruction-based setting ($\tau = 0$). This highlights the importance of regularization in our framework. However, $\tau$ setting too high results in a decline in performance ($\tau > 0.03$), likely due to excessive regularization overpowering the primary objective. 

\BF{Mask ratio $\gamma$. } ~\cref{fig:ablation} illustrates the effect of the mask ratio $\gamma$ on performance. We find that the optimal masking ratio is 75\%, which aligns with the findings in~\citet{he2022masked}. The Pearson correlation increases as $\gamma$ increases, reaching a peak at the optimal ratio. However, beyond this point, performance begins to degrade, likely due to excessive information loss hindering the reconstruction.

\begin{figure}
\centering
\includegraphics[width=0.47\textwidth,]{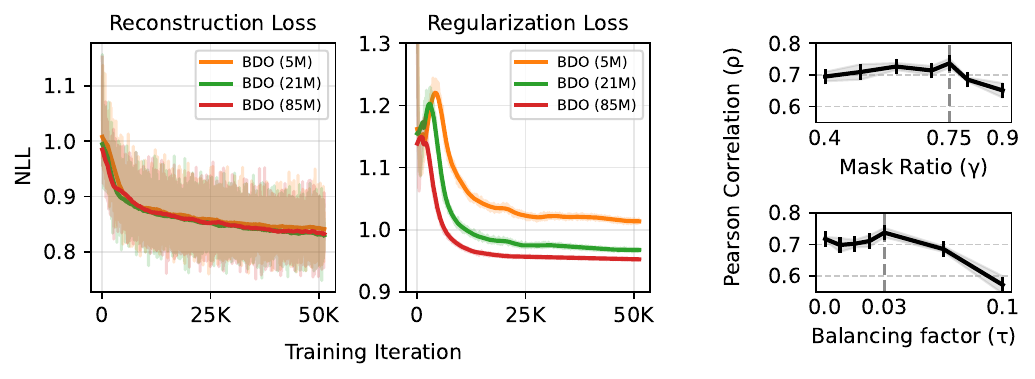}
    \caption{(Left) Training curve (Right) Pearson correlation $\rho$ as the mask ratio $\gamma$ and balancing factor $\tau$ are varied.}\label{fig:ablation}
\vspace{-4mm}
\end{figure}
\vspace{-3mm}
\section{Conclusion and Limitations}
\label{sec:main:conclusion}

In this paper, we introduced BDO, an efficient and scalable foundation model for brain dynamics, integrates continuous-discrete SSMs with SSL through the lens of SOC. Leveraging amortized inference with control formulation, our model effectively captures complex temporal dependencies in the underlying nature of fMRI data. By learning brain representations through SOC-driven SSL objective, it achieved superior performance across various downstream tasks, while demonstrating strong generalization capabilities.

Despite its strong performance, some challenges remain. The variational gap from the linear approximation may lead to cumulative errors, requiring further analysis to ensure stability and accuracy. Additionally, while partial interpretability was demonstrated, further work is required to achieve comprehensive interpretability and generalization for direct use in medical and clinical deployment. 

Nevertheless, the efficiency and scalability of BDO underscore its potential as a foundation model for fMRI. By scaling effectively across model size, data volume, and training duration while maintaining resource efficiency, BDO represents a promising step toward more effective and interpretable brain dynamics modeling, with potential applications in both neuroscience research and clinical practice.

\clearpage
\newpage

\section*{Acknowledgements}
This research was supported by Basic Science Research Program through the National Research Foundation of Korea (NRF) funded by the Ministry of Education (NRF-2022R1I1A1A01069589),
National Research Foundation of Korea(NRF) grant funded by the Korea government(MSIT) (NRF-2021M3E5D9025030),
and Institute of Information \& communications Technology Planning \& Evaluation(IITP) grant funded by the Korea government(MSIT) (No.RS-2019-II190075, Artificial Intelligence Graduate School Program(KAIST), No.2022-0-00713, Meta-learning Applicable to Real-world Problems, No.RS-2024-00509279, Global AI Frontier Lab).

\section*{Impact Statements}
In this paper, we propose a scalable foundation model for brain dynamics, combining SOC and SSL for fMRI data analysis. Pre-trained on large datasets, the model shows promise for advancing neuroscience applications, such as demographic prediction, trait analysis, and psychiatric diagnosis. While enhancing interpretability and robustness, we have not identified any immediate ethical concerns within the intended use our model. Future considerations may include addressing medical data privacy and ensuring clinical validity in healthcare applications.

\newpage
\appendix
\onecolumn
\section{Proofs and Derivations}
\label{sec:app:proofs}
\subsection{Proof of~\cref{proposition:ELBO}}
\label{sec:app:proof proposition}
We begin by presenting the Girsanov theorem~\citep{baldi2017stochastic}, which serves as a powerful tool for changing probability measures in stochastic processes. This theorem will be the key to relating the ELBO with the SOC cost function. 
\begin{theorem}[Girsanov Theorem]\label{theorem:girsanov} Consider the two Itô diffusion processes of form
\[
    & d\mathbf{X}_t = b(t, \mathbf{X}_t)dt + \sigma(t, \mathbf{X}_t)^\top d\mathbf{W}_t, \quad t \in [0, T]\label{eq:girsanov eq_1}, \\
    & d\mathbf{Y}_t = \tilde{b}(t, \mathbf{Y}_t)dt + \sigma(t, \mathbf{Y}_t)^\top d\mathbf{W}_t, \quad t \in [0, T] 
\]
where both drift functions $b, \tilde{b}$ and the diffusion function $\sigma$ assumed to be invertible are adapted to $\mathcal{F}_t$ and $\bW_{[0,T]}$ is $\mathbb{P}$-Wiener process. Moreover, consider $\bbO$ as the path measures induced by (\ref{eq:girsanov eq_1}). Let us define $\bH_t := \sigma^{-1}(\tilde{b} - b)$ which is assumed to be satisfying the Novikov's condition ($\ie \mathbb{E}_{\mathbb{P}}\left[\exp\left(\frac{1}{2}\int_0^T \norm{H_s}^2 ds\right)\right] < \infty$), and the $\mathbb{P}$-martingale process
\[
    \mathbf{M}_t := \exp \left(\int_0^1 \bH_s^\top d\bW_s -\frac{1}{2}\int_0^t \norm{\bH_s}^2 ds  \right)
\]
satisfies $\mathbb{E}_{\mathbb{P}}[\mathbf{M}_T] = 1$. Then for the path measure $\bbQ$ given as $d\bbQ = \mathbf{M}_T d\bbP$,
the process $\tilde{\bW}_t = \bW_t - \int_0^t \bH_s ds$ is a $\bbQ$-Wiener process and $\mathbf{Y}_t$ can be represented as
\begin{equation}
    d\mathbf{Y}_t = b(t, \mathbf{Y}_t)dt + \sigma(t, \mathbf{Y}_t)^\top d\tilde{\bW}_t, \quad t \in [0,T].
\end{equation}
Therefore $\bbQ$-law of the process $\mathbf{Y}_t$ is same as $\bbP$-law of the process $\mathbf{X}_t$. 
\end{theorem}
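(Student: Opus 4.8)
The plan is to prove the two substantive claims in sequence --- that $\mathbb{Q}$ defined by $d\bbQ = \bM_T\, d\bbP$ is a genuine probability measure, and that $\tilde{\bW}_t = \bW_t - \int_0^t \bH_s\,ds$ is a $\bbQ$-Wiener process --- and then to read off the representation of $\bY_t$ by a direct substitution. The final equality-of-laws conclusion (the $\bbQ$-law of $\bY$ equals the $\bbP$-law of $\bX$) then follows because, once rewritten, both processes solve the \emph{same} SDE driven by a Wiener process under their respective measures, so weak uniqueness applies.

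First I would show $\bM_t$ is a true $\bbP$-martingale. Applying It\^o's formula to $\log \bM_t = \int_0^t \bH_s^\top d\bW_s - \tfrac12\int_0^t\norm{\bH_s}^2 ds$ gives $d\bM_t = \bM_t\,\bH_t^\top d\bW_t$, so $\bM_t$ is a nonnegative local martingale, hence a supermartingale with $\bM_0 = 1$. Novikov's condition $\bbE_{\bbP}[\exp(\tfrac12\int_0^T\norm{\bH_s}^2 ds)]<\infty$ is exactly what upgrades this to a uniformly integrable \emph{true} martingale, yielding $\bbE_{\bbP}[\bM_T]=1$; thus $\bbQ$ is a probability measure equivalent to $\bbP$.

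The core step is to identify the $\bbQ$-law of $\tilde{\bW}$ via L\'evy's characterization: it suffices to show that $\tilde{\bW}_t$ and $\tilde{\bW}^i_t\tilde{\bW}^j_t - \delta_{ij}t$ are continuous $\bbQ$-local martingales. Here I would use the change-of-measure identity that a process $\bN_t$ is a $\bbQ$-(local) martingale iff $\bN_t\bM_t$ is a $\bbP$-(local) martingale, and then apply the It\^o product rule. For $\bN_t=\tilde{\bW}_t$ one computes $d(\tilde{\bW}_t\bM_t)$ and finds that the drift from $-\bH_t\,dt$ in $d\tilde{\bW}_t = d\bW_t - \bH_t\,dt$ (scaled by $\bM_t$) is cancelled exactly by the covariation term $d\langle\bW,\bM\rangle_t = \bM_t\bH_t\,dt$, leaving only local-martingale increments. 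The quadratic-variation computation for the second family is identical to the Brownian case, since the finite-variation shift does not alter $\langle\tilde{\bW}\rangle_t = t\,\bI$; continuity is inherited from $\bW$. Hence $\tilde{\bW}$ is a standard $\bbQ$-Wiener process.

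Finally, substituting $d\bW_t = d\tilde{\bW}_t + \bH_t\,dt$ into the dynamics of $\bY_t$ and using the defining relation between $\bH_t$ and $\tilde b - b$ collapses the drift from $\tilde b$ to $b$, giving $d\bY_t = b(t,\bY_t)\,dt + \sigma(t,\bY_t)^\top d\tilde{\bW}_t$, which is precisely the $\bbP$-dynamics of $\bX_t$ with $\tilde{\bW}$ playing the role of the driving Wiener process. I expect the main obstacle to be the local-to-true martingale upgrade --- verifying that Novikov's condition genuinely rules out mass loss so that $\bbE_{\bbP}[\bM_T]=1$ --- together with the product-rule bookkeeping that makes the drift of $\tilde{\bW}_t\bM_t$ vanish; in particular the transpose and sign conventions in the definition of $\bH_t$ must be tracked consistently for the final drift cancellation to come out correctly.
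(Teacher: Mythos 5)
Your argument is correct in outline, but note that the paper itself offers no proof of this statement: Theorem A.1 is quoted as a classical result with a citation to \citet{baldi2017stochastic} and used as a black box inside the proof of Proposition 4.1, so the comparison here is between your proof and the textbook one (which is what you have reproduced). Your route --- It\^o's formula giving $d\bM_t = \bM_t \bH_t^\top d\bW_t$, Novikov's criterion to upgrade the nonnegative local martingale to a true uniformly integrable martingale, the lemma that $\bN$ is a $\bbQ$-(local) martingale iff $\bN\bM$ is a $\bbP$-(local) martingale combined with L\'evy's characterization, then the substitution $d\bW_t = d\tilde{\bW}_t + \bH_t\,dt$ --- is exactly the standard proof, and your product-rule cancellation $d\langle \bW, \bM\rangle_t = \bM_t\bH_t\,dt$ is computed correctly. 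Two of your side remarks deserve emphasis because they expose real defects in the statement as printed. First, the sign/transpose caveat is not mere bookkeeping: with the stated definitions $\bH_t = \sigma^{-1}(\tilde b - b)$ and $\tilde{\bW}_t = \bW_t - \int_0^t \bH_s\,ds$, the substitution yields drift $\tilde b + \sigma^\top\sigma^{-1}(\tilde b - b)$, which equals $2\tilde b - b$ (not $b$) even when $\sigma^\top\sigma^{-1} = \bI$; the claimed conclusion requires $\bH_t = (\sigma^\top)^{-1}(b - \tilde b)$, i.e.\ the sign in the paper's definition is reversed (the transpose is immaterial in the paper's application, where $\sigma(t)$ is scalar). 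Second, your appeal to weak uniqueness for the final equality-of-laws claim identifies a hypothesis the statement silently omits: without uniqueness in law for the SDE $d\bZ_t = b\,dt + \sigma^\top d\bW_t$, knowing that $\bY$ solves the same equation under $\bbQ$ does not by itself force its $\bbQ$-law to coincide with the $\bbP$-law of $\bX$. (A further typo you implicitly correct: the exponent in $\bM_t$ should integrate over $[0,t]$, not $[0,1]$.)
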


\begin{proof} Consider the definition of the normalization constant:
\[
 \log \bZ(\mathcal{Y}) &= \log \bbE_{\bX \sim~\eqref{eq:prior dynamics}} \left[ p(\mathcal{Y} | \bX_{[0, T]}) \right]
\]
Expanding this expectation, we have
    \[
        \log \bZ(\mathcal{Y}) &= \log \bbE_{\bX \sim~\eqref{eq:prior dynamics}} \left[ p(\mathcal{Y} | \bX_{[0, T]}) \right] \\
        & = \log \bbE_{\bX \sim~\eqref{eq:prior dynamics}} \left[ p(\mathcal{Y} | \bX^{\alpha}_{[0, T]}) \frac{p(\bX_{[0, T]}) }{p(\bX^{\alpha}_{[0, T]})}  \right] \\
        & \stackrel{(i)}{\geq} \bbE_{\bX \sim~\eqref{eq:linearized controlled SDE}} \left[\log p(\mathcal{Y} | \bX^{\alpha}_{[0, T]}) + \log \frac{p(\bX_{[0, T]})}{p(\bX^{\alpha}_{[0, T]})} \right] \\
        & = \bbE_{\bX \sim~\eqref{eq:linearized controlled SDE}} \left[\sum_{t \in \mathcal{T}} g(\by_t | \bX^{\alpha}_t) + \log \frac{p(\bX_{[0, T]})}{p(\bX^{\alpha}_{[0, T]})}\right] \\
        & \stackrel{(ii)}{=} \bbE_{\bX \sim~\eqref{eq:linearized controlled SDE}} \left[\sum_{t \in \mathcal{T}} g(\by_t | \bX^{\alpha}_t) -\frac{1}{2}\int_0^T \norm{\alpha_t}^2 dt + \int_0^1 \alpha_t d\bW_s \right]  \\
        & \stackrel{(iii)}{=} \bbE_{\bX \sim~\eqref{eq:linearized controlled SDE}} \left[\sum_{t \in \mathcal{T}} g(\by_t | \bX^{\alpha}_t) -\frac{1}{2}\int_0^T \norm{\alpha_t}^2 dt \right] \\
        & = -\mathcal{J}(\alpha, \mathcal{Y}),
    \]
where $(i)$ results from Jensen's inequality, $(ii)$ follows by applying Girsanov's theorem~\cref{theorem:girsanov}, and in the final equality, $(iii)$ holds because $\bW_t$ is a martingale process with respect to the distribution $p(\bX^{\alpha}_{[0, T]})$.

\end{proof}

\subsection{Proof of~\cref{theorem:simulation free inference}}
\label{sec:app:proof theorem}
\begin{proof} 

Since each SPD matrix \(\bD_{t}\) for \(t \in \mathcal{T}\) admits an eigen-decomposition \(\bD_{t_i} = \bV \bLambda_{t_i} \bV^{\top}\), we can transform the original process \(\bX^{\alpha}_t\), which is expressed in the canonical basis, into a new process \(\hat{\bX}^{\alpha}_t = \bV^\top \bX^{\alpha}_t\) that resides in the space spanned by the eigenbasis \(\bV\).  With this transformation, the dynamics in~\eqref{eq:linearized controlled SDE} can be rewritten, for any interval \([t_i, t_{i+1})\), as:
\[
\label{eq:linear controlled SDE appx}
d\hat{\bX}^{\alpha}_t = \left[-\bLambda_{t_i} \hat{\bX}^{\alpha}_t + \alpha_{t_i}\right] dt + d\hat{\bW}_t,
\]
where $\hat{\bX}^{\alpha}_t = \bV^\top \bX^{\alpha}_t$, $\hat{\alpha}_{t_i} = \bV^\top \alpha_{t_i}$, $\hat{\bW}_t = \bV^\top \bW_t$ and initial condition $\hat{\bX}^{\alpha}_0 \sim \mathcal{N}(\hat{\mu}_0, \hat{\Sigma}_0)$ with $\hat{\mu}_0 = \bV^\top \mu_0$ and $\hat{\Sigma}_0 = \bV^\top \Sigma_0 \bV$. Since \(\bV\) is orthonormal, \(\hat{\bW}_t\) retains the distribution \(\hat{\bW}_t \stackrel{d}{=} \bW_t\) for all \(t \in [0, T]\), allowing \(\hat{\bW}_t\) to be treated as a standard Wiener process. Now, given that \(\bLambda_{t_i}\) is diagonal, the linear SDE in equation~\eqref{eq:linear controlled SDE appx} admits a closed-form solution for any \(t \in [t_i, t_{i+1})\):
\[
\hat{\bX}^{\alpha}_t = e^{-(t - t_i)\bLambda_{t_i}} \left( \hat{\bX}^{\alpha}_{t_i} + \int_{t_i}^t e^{(s - t_i)\bLambda_{t_i}} \hat{\alpha}_{t_i} \, ds + \int_{t_i}^t e^{(s - t_i)\bLambda_{t_i}} \, d\hat{\bW}_s \right).
\]
Since the initial condition \(\hat{\bX}^{\alpha}_0\) is Gaussian and the SDE is linear with Gaussian noise, the process \(\hat{\bX}^{\alpha}_t\) remains Gaussian. Therefore, its first two moments—the mean and covariance—can be derived from the solution above. To derive the moments, we firstly evaluate the deterministic integral involving \(\hat{\alpha}_{t_i}\):
\[
\int_{t_i}^t e^{(s - t_i)\bLambda_{t_i}} \hat{\alpha}_{t_i} \, ds = -\bLambda_{t_i}^{-1} \left( \bI - e^{(t - t_i)\bLambda_{t_i}}\right) \hat{\alpha}_{t_i}.
\]
Taking the expectation of \(\hat{\bX}^{\alpha}_t\), and using the martingale property of the Wiener process \(\hat{\bW}_t\), we obtain:
\[
\label{eq:mean recur}
\hat{\mu}_t = \bbE_{\hat{\bX}^{\alpha} \sim~\eqref{eq:linear controlled SDE appx}}\left[\hat{\bX}^{\alpha}_t\right] = e^{-(t - t_i)\bLambda_{t_i}} \hat{\mu}_{t_i} - e^{-(t - t_i)\bLambda_{t_i}} \bLambda_{t_i}^{-1} \left( \bI - e^{-(t - t_i)\bLambda_{t_i}} \right) \hat{\alpha}_{t_i}.
\]
Next, compute the covariance of \(\hat{\bX}^{\alpha}_t\):
\[
    \hat{\Sigma}_{t} &= \bbE_{\hat{\bX}^{\alpha} \sim~\eqref{eq:linear controlled SDE appx}}\left[ e^{-2(t - t_i) \bLambda_{t_i}} \left( \bX_{t_i} - \mu_{t_i} + \int_{t_i}^t e^{(s - t_i) \bLambda_{t_i}} d\hat{\bW}_s  \right) \left( \bX_{t_i} - \mu_{t_i} + \int_{t_i}^t e^{(s - t_i) \bLambda_{t_i}} d\hat{\bW}_s  \right)^{\top}\right] \\
    & = e^{-2(t - t_i) \bLambda_{t_i}} \bbE_{\hat{\bX}^{\alpha} \sim~\eqref{eq:linear controlled SDE appx}}\left[ \left( \bX_{t_i} - \mu_{t_i} \right)\left( \bX_{t_i} - \mu_{t_i} \right)^{\top} +  \norm{\int_{t_i}^t e^{(s - t_i) \bLambda_{t_i}} d\hat{\bW}_s}_2^2  \right] \\
    & \stackrel{(i)}{=} e^{-2(t - t_i) \bLambda_{t_i}} \bbE_{\hat{\bX}^{\alpha} \sim~\eqref{eq:linear controlled SDE appx}}\left[ \left( \bX_{t_i} - \mu_{t_i} \right)\left( \bX_{t_i} - \mu_{t_i} \right)^{\top} +  \int_{t_i}^t e^{2(s - t_i) \bLambda_{t_i}} ds \right] \\
    & \stackrel{(ii)}{=} e^{-2(t - t_i) \bLambda_{t_i}} \hat{\Sigma}_{t_i} - \frac{1}{2} e^{-2(t - t_i) \bLambda_{t_i}} \bLambda^{-1}_{t_i} \left( \bI - e^{2(t - t_i) \bLambda_{t_i}} \right), \label{eq:cov recur}
\]
where $(i)$ follows from the martingale property of $\hat{\bW}_t$ and $(ii)$ follows from Itô isometry:
\[ 
\bbE_{\hat{\bX}^{\alpha} \sim~\eqref{eq:linear controlled SDE appx}}\left[ \norm{\int_{t_i}^t e^{(s - t_i) \bLambda_{t_i}} d\hat{\bW}_s}_2^2 \right] = \bbE_{\hat{\bX}^{\alpha} \sim~\eqref{eq:linear controlled SDE appx}}\left[ \int_{t_i}^t e^{2(s - t_i) \bLambda_{t_i}} ds \right].
\]
Using the recursive forms for the mean and covariance, we can determine these moments at each discrete time step \(t_i\). For the mean \(\hat{\mu}_{t_i}\), the recurrence relation is:
\[
    & \hat{\mu}_{t_1} = e^{-(t_1 - t_0) \bLambda_{t_0}} \hat{\mu}_{t_0} - e^{-(t_1 - t_0)\bLambda_{t_1}} \bLambda^{-1}_{t_0} \left( \bI - e^{(t_1 - t_0) \bLambda_{t_0}} \right) \hat{\alpha}_{t_0} \\
    & \hat{\mu}_{t_2} = e^{-\sum_{j=0}^1 (t_{j+1} - t_{j}) \bLambda_{t_j}}   \hat{\mu}_{t_0}  \\
    & \hspace{20mm} - e^{-\sum_{j=0}^1 (t_{j+1} - t_{j}) \bLambda_{t_j}} \bLambda^{-1}_{t_0} \left( \bI - e^{(t_1 - t_0) \bLambda_{t_0}} \right) \hat{\alpha}_{t_0} - e^{-(t_2 - t_1)\bLambda_{t_1}} \bLambda^{-1}_{t_1} \left( \bI - e^{(t_2 - t_1) \bLambda_{t_1}}\right) \hat{\alpha}_{t_1} \\
    & \vdots \\
    & \hat{\mu}_{t_i} = e^{-\sum_{j=0}^{i-1} (t_{j+1} - t_{j}) \bLambda_{t_j}}   \hat{\mu}_{t_0} - \sum_{l=0}^{i-1} e^{-\sum_{j=l}^{i-1} (t_{j+1} - t_{j}) \bLambda_{t_j}} \bLambda^{-1}_{t_l} \left( \bI - e^{(t_{l+1} - t_l) \bLambda_{t_l}} \right) \hat{\alpha}_{t_{l}}
\]

Similarly, for the covariance \(\hat{\Sigma}_{t_i}\), the recurrence relation is:
\[
    & \hat{\Sigma}_{t_1} = e^{-2(t_1 - t_0) \bLambda_{t_0}} \hat{\Sigma}_{t_0} - \frac{1}{2}e^{-2(t_1 - t_0)\bLambda_{t_1}} \bLambda^{-1}_{t_0} \left( \bI - e^{2(t_1 - t_0) \bLambda_{t_0}} \right) \\
    & \hat{\Sigma}_{t_2} = e^{-\sum_{j=0}^1 2(t_{j+1} - t_{j}) \bLambda_{t_j}}   \hat{\Sigma}_{t_0}\\
    & \hspace{20mm} - \frac{1}{2}e^{-\sum_{j=0}^1 2(t_{j+1} - t_{j}) \bLambda_{t_j}} \bLambda^{-1}_{t_0} \left( \bI - e^{2(t_1 - t_0) \bLambda_{t_0}} \right) - \frac{1}{2}e^{-2(t_2 - t_1)\bLambda_{t_1}} \bLambda^{-1}_{t_1} \left( \bI - e^{2(t_2 - t_1) \bLambda_{t_1}}\right) \\
    & \vdots \\
    & \hat{\Sigma}_{t_i} = e^{-2\sum_{j=0}^{i-1} (t_{j+1} - t_{j}) \bLambda_{t_j}}   \hat{\Sigma}_{t_0} - \frac{1}{2}\sum_{l=0}^{i-1} e^{-2\sum_{j=l}^{i-1} (t_{j+1} - t_{j}) \bLambda_{t_j}} \bLambda^{-1}_{t_l} \left( \bI - e^{2(t_{l+1} - t_l) \bLambda_{t_l}} \right).
\]
Now, since \(\hat{\bX}^{\alpha}_t = \bV^\top \bX^{\alpha}_t\), with \(\hat{\mu}_0 = \bV^\top \mu_0\) and \(\hat{\Sigma}_0 = \bV^\top \Sigma_0 \bV\), we can express the mean and covariance in the original canonial basis as follows. For the mean $\hat{\mu}_{t \in \mathcal{T}}$, which is given by
\[
     \bV \hat{\mu}_{t_i} &= \bV \left( e^{-\sum_{j=0}^{i-1} (t_{j+1} - t_{j}) \bLambda_{t_j}}   \hat{\mu}_{t_0} - \sum_{l=0}^{i-1} e^{-\sum_{j=l}^{i-1} (t_{j+1} - t_{j}) \bLambda_{t_j}} \bLambda^{-1}_{t_l} \left( \bI - e^{(t_{l+1} - t_l) \bLambda_{t_l}} \right) \hat{\alpha}_{t_{l}} \right) \\
     & = \bV \left( e^{-\sum_{j=0}^{i-1} (t_{j+1} - t_{j}) \bLambda_{t_j}} \bV^{\top} \mu_0 - \sum_{l=0}^{i-1} e^{-\sum_{j=l}^{i-1} (t_{j+1} - t_{j}) \bLambda_{t_j}} \bLambda^{-1}_{t_l} \left( \bI - e^{(t_{l+1} - t_l) \bLambda_{t_l}} \right) \bV^{\top} \alpha_{t_{l}} \right) \\
     & = e^{-\sum_{j=0}^{i-1} (t_{j+1} - t_{j}) \bD_{t_j}} \mu_0 - \bV \left( \sum_{l=0}^{i-1} e^{-\sum_{j=l}^{i-1} (t_{j+1} - t_{j}) \bLambda_{t_j}} \bLambda^{-1}_{t_l} \left( \bI - e^{(t_{l+1} - t_l) \bLambda_{t_l}} \right) \bV^{\top} \alpha_{t_{l}} \right) \\
     & = e^{-\sum_{j=0}^{i-1} (t_{j+1} - t_{j}) \bD_{t_j}} \mu_0 - \bV \left( \sum_{l=0}^{i-1} e^{-\sum_{j=l}^{i-1} (t_{j+1} - t_{j}) \bLambda_{t_j}} \bV^{\top} \bD^{-1}_{t_l} \bV \left( \bI - e^{(t_{l+1} - t_l) \bLambda_{t_l}} \right) \bV^{\top} \alpha_{t_{l}} \right) \\
     & = e^{-\sum_{j=0}^{i-1} (t_{j+1} - t_{j}) \bD_{t_j}} \mu_0 -  \sum_{l=0}^{i-1} e^{-\sum_{j=l}^{i-1} (t_{j+1} - t_{j}) \bD_{t_j}} \bD^{-1}_{t_l} \left( \bI - e^{(t_{l+1} - t_l) \bD_{t_l}} \right) \alpha_{t_{l}} \\
     & = \mu_{t_i}
\]
where we used \(\bD_{t_j} = \bV \bLambda_{t_j} \bV^{\top}\) and the orthonormality of \(\bV\). Similarly, for the covariance $\hat{\Sigma}_{t \in \mathcal{T}}$, we have

\[
     \bV \hat{\Sigma}_{t_i} \bV^{\top} &= \bV \left( e^{-2\sum_{j=0}^{i-1} (t_{j+1} - t_{j}) \bLambda_{t_j}}   \hat{\Sigma}_{t_0} - \frac{1}{2}\sum_{l=0}^{i-1} e^{-2\sum_{j=l}^{i-1} (t_{j+1} - t_{j}) \bLambda_{t_j}} \bLambda^{-1}_{t_l} \left( \bI - e^{2(t_{l+1} - t_l) \bLambda_{t_l}} \right) \right)\bV^{\top} \\
     & = \bV \left( e^{-2\sum_{j=0}^{i-1} (t_{j+1} - t_{j}) \bLambda_{t_j}} \bV^{\top} \Sigma_0 \bV - \frac{1}{2}\sum_{l=0}^{i-1} e^{-2\sum_{j=l}^{i-1} (t_{j+1} - t_{j}) \bLambda_{t_j}} \bLambda^{-1}_{t_l} \left( \bI - e^{2(t_{l+1} - t_l) \bLambda_{t_l}} \right)  \right)\bV^{\top} \\
     & = e^{-2\sum_{j=0}^{i-1} (t_{j+1} - t_{j}) \bD_{t_j}} \Sigma_0 - \bV \left( \frac{1}{2}\sum_{l=0}^{i-1} e^{-2\sum_{j=l}^{i-1} (t_{j+1} - t_{j}) \bLambda_{t_j}} \bLambda^{-1}_{t_l} \left( \bI - e^{2(t_{l+1} - t_l) \bLambda_{t_l}} \right) \right)\bV^{\top} \\
     & = e^{-2\sum_{j=0}^{i-1} (t_{j+1} - t_{j}) \bD_{t_j}} \Sigma_0 - \bV \left( \sum_{l=0}^{i-1} e^{-2\sum_{j=l}^{i-1} (t_{j+1} - t_{j}) \bLambda_{t_j}} \bV^{\top} \bD^{-1}_{t_l} \bV \left( \bI - e^{2(t_{l+1} - t_l) \bLambda_{t_l}} \right) \right)\bV^{\top} \\
     & = e^{-2\sum_{j=0}^{i-1} (t_{j+1} - t_{j}) \bD_{t_j}} \Sigma_0 -  \frac{1}{2}\sum_{l=0}^{i-1} e^{-2\sum_{j=l}^{i-1} (t_{j+1} - t_{j}) \bD_{t_j}} \bD^{-1}_{t_l} \left( \bI - e^{2(t_{l+1} - t_l) \bD_{t_l}} \right) \\
     & = \Sigma_{t_i}
\]
Thus, both the mean \(\mu_{t_i}\) and the covariance \(\Sigma_{t_i}\) of \(\bX^{\alpha}_t\) at each time step \(t_i\) are correctly recovered, completing the proof.
\end{proof}

\subsection{Derivation of ELBO in~\cref{eq:training objective}}
\label{sec:app:deriving elbo}
We start the derivation by integrating the mixture distribution in~\eqref{eq:mixture distribution} into the SOC problem~\eqref{eq: MAE_3} as follows:
\[
    & \log p(\mathcal{Y}_{\text{tar}} | \bX^{\theta}_{[0, T]}) = \log \int  \gamma_{\psi}(\by_t | \bz_t) \pi_{\bar{\theta}}(\bz_t | \bX^{\theta}_t) d\bz_t \\
    &  = \log \int  \gamma_{\psi}(\by_t | \bz_t) \frac{1}{\bZ(\bX^{\theta}_t)}\left[ p(\bz_t | \bX^{\theta}_{t})^{\lambda} q_{\bar{\theta}}(\bz_t| \mathcal{Y}_{\text{tar}})^{1-\lambda}  \right] d\bz_t \\
    & = \log \int \gamma_{\psi}(\by_t | \bz_t) \left[ \frac{p(\bz_t | \bX^{\theta}_{t})^{\lambda} q_{\bar{\theta}}(\bz_t | \mathcal{Y}_{\text{tar}})^{1-\lambda}}{\bZ(\bX^{\theta}_t) h(\bz_t)} \right] h(\bz_t) d\bz_t -\log \bZ(\bX^{\theta}_t) \\
    & \stackrel{(i)}{\geq}  \int \left[ \log \gamma_{\psi}(\by_t | \bz_t) + \lambda \log p(\bz_t | \bX^{\theta}_{t}) + (1-\lambda) \log q_{\bar{\theta}}(\bz_t | \mathcal{Y}_{\text{tar}}) - \log h(\bz_t) \right] h(\bz_t) d\bz_t -\log \bZ(\bX^{\theta}_t) \\
    & \stackrel{(ii)}{\geq} \int \left[ \log \gamma_{\psi}(\by_t | \bz_t) + (\lambda - 1) \log p(\bz_t | \bX^{\theta}_{t}) + (1-\lambda) \log q_{\bar{\theta}}(\bz_t | \mathcal{Y}_{\text{tar}}) \right] p(\bz_t | \bX^{\theta}_{t}) d\bz_t -\log \bZ(\bX^{\theta}_t)  \\
    & \stackrel{(iii)}{=} \int \left[ \log \gamma_{\psi}(\by_t | \bz_t) + (1-\lambda) \log q_{\bar{\theta}}(\bz_t | \mathcal{Y}_{\text{tar}}) \right] p(\bz_t | \bX^{\theta}_{t}) d\bz_t + (1 - \lambda) C  -\log \bZ(\bX^{\theta}_t) \\
    & \stackrel{(iv)}{\geq} \bbE_{\bz_t \sim p(\bz_t | \bX^{\theta}_t)} \left[\underbrace{\log \gamma_{\psi}(\by_t | \bz_t)}_{\text{MAE}} + (1-\lambda) \underbrace{\log q_{\bar{\theta}}(\bz_t | \mathcal{Y}_{\text{tar}})}_{\text{JEPA}}  \right] \\
    & = \bbE_{\bz_t \sim p(\bz_t | \bX^{\theta}_t)} \left[ \frac{1}{2\sigma^2_\gamma} \norm{\by_t - \bD_{\psi}(\bz_t)}^2 + \frac{(1-\lambda)}{2\sigma^2_q} \norm{\bz_t - \mathcal{T}_{\bar{\theta}}(t, \mathcal{Y}_{\text{tar}})}^2\right],
\]
where $(i)$ follows from Jensen's inequality, and $(ii)$ follows by setting proposal distribution $ h(\bz_t) = p(\bz_t | \bX^{\theta}_{t})$, $(iii)$ follows from the definition $p(\bz_t | \bX^{\theta}_{t}) \sim \mathcal{N}(\bX^{\theta}_{t}, \sigma_p^2 \bI)$, since the entropy of Gaussian with constant covariance:
\[
& \int (\lambda - 1) \log p(\bz_t | \bX^{\theta}_{t}) p(\bz_t| \bX^{\theta}_t) d \bz_t = (1 - \lambda)  \int -\log p(\bz_t | \bX^{\theta}_{t}) p(\bz_t| \bX^{\theta}_{t}) d\bz_t = (1 - \lambda) C \geq 0.
\]
Finally, $(iv)$ follows from $(1 - \lambda) C \geq 0$ and since the normalization constant $\bZ(\bX^{\theta}_t)$ is calculated as:
\[
    \bZ(\bX^{\theta}_{t}) &=  \int \gamma_{\psi}(\bz_t | \bX^{\theta}_{t} )^{\lambda} q_{\bar{\theta}}(\bz_t | \mathcal{Y}_{\text{tar}})^{1-\lambda}  d\bz_t = \int \bC_1 \exp \left[ - \frac{\lambda}{2 \sigma^2_p} \norm{\bz_t - \bX^{\theta}_t}^2 - \frac{(1-\lambda)}{2\sigma^2_q} \norm{\bz_t - \bT_{\bar{\theta}}(t, \mathcal{Y}_{\text{tar}})}^2 \right] \\
    & = \int \bC_1 \exp \left[ -\frac{1}{2} (\bz_t - \bm)^{\top} \bS^{-1}(\bz_t - \bm)  +  \frac{1}{2} \left( \bm^{\top} \bS^{-1} \bm - \frac{\lambda}{\sigma^2_p} \norm{\bX^{\theta}_{t}}^2 - \frac{1 - \lambda}{\sigma^2_q} \norm{\bT_{t, \bar{\theta}}(\mathcal{Y}_{\text{tar}})}^2\right)\right] \\
    & = \bC_3 \exp \left[\frac{1}{2} \left( \bm^{\top} \bS^{-1} \bm - \frac{\lambda}{\sigma^2_p} \norm{\bX^{\theta}_{t}}^2 - \frac{1 - \lambda}{\sigma^2_q} \norm{\bT_{t, \bar{\theta}}(\mathcal{Y}_{\text{tar}})}^2 \right)\right],
\]
where $\bC_1 = \frac{1}{(2\pi)^{d/2} (\sigma_1^2)^{\frac{\lambda d}{2}}(\sigma_3^2)^{\frac{(1-\lambda) d}{2}}}$, $\bC_3 = \frac{1}{\left(\frac{\lambda}{\sigma_1^2} + \frac{1-\lambda}{\sigma_3^2} \right)^{d/2} (\sigma_1^2)^{\frac{\lambda d}{2}}(\sigma_3^2)^{\frac{(1-\lambda) d}{2}}}$, 
\[
    \bm = \bS\left(\frac{\lambda}{\sigma_p^2} \bX^{\theta}_{t} + \frac{1-\lambda}{\sigma_q^2} \bT_{\bar{\theta}}(t, \mathcal{Y}_{\text{tar}}) \right), \; \text{and} \; \bS = \left(\frac{\lambda}{\sigma_p^2} + \frac{1-\lambda}{\sigma_q^2} \right)^{-1} \bI.
\]
Consequently, we get
\[
     \bZ(\bX^{\theta}_{t}) &= \bC_3 \exp \left[ \frac{1}{2} \left( \frac{\left(  \frac{\lambda}{\sigma_p^2} \bX^{\theta}_{t} + \frac{1-\lambda}{\sigma_q^2} \bT_{\bar{\theta}}(t, \mathcal{Y}_{\text{tar}})  \right)^{\top} \left( \frac{\lambda}{\sigma_p^2} \bX^{\theta}_{t} + \frac{1-\lambda}{\sigma_q^2} \bT_{\bar{\theta}}(t, \mathcal{Y}_{\text{tar}}) \right)}{\left(\frac{\lambda}{\sigma_p^2} + \frac{1-\lambda}{\sigma_q^2} \right)} \right) -  \frac{\lambda}{\sigma_p^2} \norm{\bX^{\theta}_{t}}^2 - \frac{1-\lambda}{\sigma_q^2} \norm{\bT_{\bar{\theta}}(t, \mathcal{Y}_{\text{tar}})}^2\right] \\
     & = \bC_3 \exp \left[ - \frac{\frac{\lambda(1 - \lambda)}{\sigma_1^2 \sigma^3_2}}{2 \left(\frac{\lambda}{\sigma_1^2} + \frac{1-\lambda}{\sigma_3^2} \right)} \norm{\bX^{\theta}_{t} - \bT_{\bar{\theta}}(t, \mathcal{Y}_{\text{tar}})}^2 \right].
\]
It implies that $-\log \bZ(\bX^{\theta}_t) \geq 0$. Hence we can derive the desired inequality in~\eqref{eq:training objective}:
\[
    - \log p(\mathcal{Y}_{\text{tar}} | \mathcal{Y}_{\text{ctx}}) & \leq \bbE_{\bX^{\theta} \sim \eqref{eq:linearized controlled SDE}} \left[ \int_0^T \frac{1}{2}\norm{\alpha^{\theta}_t}^2 dt - \sum_{t \in \mathcal{T}_{\text{obs}}} \bbE_{p(\bz_t | \bX^{\theta}_t)} \left(\log g_{\psi}(\by_t | \bz_t) + (1-\lambda) \log q_{\bar{\theta}}(\bz_t | \mathcal{Y}_{\text{tar}})\right) \right] \\
    & \hspace{-17mm} = \bbE_{\bX^{\theta} \sim \eqref{eq:linearized controlled SDE}} \left[ \int_0^T \frac{1}{2}\norm{\alpha^{\theta}_t}^2 dt - \sum_{t \in \mathcal{T}_{\text{obs}}} \bbE_{\bz_t \sim p(\bz_t | \bX^{\theta}_t)} \left[ \frac{1}{2\sigma^2_\gamma} \norm{\by_t - \bD_{\psi}(\bz_t)}^2 + \frac{(1-\lambda)}{2\sigma^2_q} \norm{\bz_t - \mathcal{T}_{\bar{\theta}}(t, \mathcal{Y}_{\text{tar}})}^2\right] \right] \\
    & = \mathcal{L}(\theta, \psi).
\]
For stable learning, we train our model with rescaled training objective as a factor of $2\sigma^2_{\gamma}$:
\[\label{eq:rescaled training objective}
    \hat{\mathcal{L}}(\theta, \psi) = \bbE_{\bX^{\theta} \sim \eqref{eq:linearized controlled SDE}} \left[ \int_0^T \sigma^2_q \norm{\alpha^{\theta}_t}^2 dt - \sum_{t \in \mathcal{T}_{\text{obs}}} \bbE_{\bz_t \sim p(\bz_t | \bX^{\theta}_t)} \left[ \underbrace{\norm{\by_t - \bD_{\psi}(\bz_t)}^2}_{\text{reconstruction}} + \tau \underbrace{\norm{\bz_t - \mathcal{T}_{\bar{\theta}}(t, \mathcal{Y}_{\text{tar}})}^2}_{\text{regularization}}\right] \right],
\]
Here, $\tau = \frac{(1-\lambda) \sigma^2_{\gamma}}{\sigma^2_q}$ determines the balance between reconstruction and regularization. See~\cref{sec:main:experiment:ablation study} for details on how controlling the regularization influences the performance of BDO.

\section{Parallel Scan Algorithm}\label{sec:parallel_scan_algorithm}

The computation of the first two moments—the mean $\mu_{t \in \mathcal{T}}$ and covariance $\Sigma_{t \in \mathcal{T}}$—of the controlled distributions can be efficiently parallelized using the scan (all-prefix-sums) algorithm~\citep{blelloch1990prefix}. Leveraging the associativity of the underlying operations, we reduce the computational complexity from $\mathcal{O}(k)$ to $\mathcal{O}(\log k)$ time with respect to the number of time steps $k$. We have established the linear recurrence in~\cref{theorem:simulation free inference} for the mean and covariance at each time step $t_i$:
\begin{align}
    \mathbf{m}_{t_i} &= \hat{\bA}_i \mathbf{m}_{t_{i-1}} - \hat{\bB}_i \mathbf{\alpha}_{t_i}, \label{eq:Mean_recurrence} \\
    \mathbf{\Sigma}_{t_i} &= \bar{\bA}_i \mathbf{\Sigma}_{t_{i-1}} - \bar{\bB}_i \bI, \label{eq:Covariance_recurrence}
\end{align}
where we, for brevity, we define $\Delta_{i}(t) = t - t_{i}$, $\hat{\mathbf{A}}_{i} = e^{-\Delta_{i-1}(t_{i})\bLambda_{t_i}}$, $\hat{\bB}_i = - e^{-(t_i - t_{i-1})\mathbf{\Lambda}_{t_i}} \mathbf{\Lambda}_{t_i}^{-1} \left( \mathbf{I} - e^{-(t_i - t_{i-1})\mathbf{\Lambda}_{t_i}} \right)$, $\bar{\bA}_i = e^{-2\Delta_{i-1}(t_{i})\bLambda_{t_i}}$ and $\bar{\bB}_i = \frac{1}{2} e^{-2(t_i - t_{i-1})\mathbf{\Lambda}_{t_i}} \mathbf{\Lambda}_{t_i}^{-1} \left( \mathbf{I} - e^{-2(t_i - t_{i-1})\mathbf{\Lambda}_{t_i}} \right)$. To apply the parallel scan algorithm to our recurrence, we define two separate sequences of tuples for the mean and covariance computations for all $i \in \{1, \cdots, k\}$:
\[
    \mathbf{M}_i = \left(\hat{\mathbf{A}}_{i}, \hat{\mathbf{B}}_{i} \mathbf{\alpha}_{t_i}\right),  \quad \mathbf{S}_i = \left(\bar{\mathbf{A}}_{i}, \bar{\mathbf{B}}_{i}\right)
\]
Now, we define binary associative operators $\otimes$ and  for the sequences $\{\mathbf{M}_i\}$ and $\{\mathbf{S}_i\}$:
\[
    & \mathbf{M}_i \otimes \mathbf{M}_j = \left(\hat{\mathbf{A}}_i \circ \hat{\mathbf{A}}_j, \hat{\mathbf{A}}_i \circ \hat{\mathbf{B}}_j \mathbf{\alpha}_{t_j} + \hat{\mathbf{B}}_i \mathbf{\alpha}_{t_i}\right), \label{eq:operator_M} \\
    & \mathbf{S}_i \otimes \mathbf{S}_j = \left(\bar{\mathbf{A}}_i \circ \bar{\mathbf{A}}_j, \bar{\mathbf{A}}_i \circ \bar{\mathbf{B}}_j + \bar{\mathbf{B}}_i\right), \label{eq:operator_S}
\]
where $\circ$ denotes element-wise multiplication. We can verify that $\otimes$ is an associative operator since it satisfies:
\[
    (\mathbf{M}_s \otimes \mathbf{M}_t) \otimes \mathbf{M}_u &= \left(\hat{\mathbf{A}}_t \circ \hat{\mathbf{A}}_s, \hat{\mathbf{A}}_t \circ \hat{\mathbf{B}}_s \mathbf{\alpha}_{t_s} + \hat{\mathbf{B}}_t \mathbf{\alpha}_{t_t}\right) \otimes\mathbf{M}_u \\
    &= \left(\hat{\mathbf{A}}_u \circ (\hat{\mathbf{A}}_t \circ \hat{\mathbf{A}}_s), \hat{\mathbf{A}}_u \circ (\hat{\mathbf{A}}_t \circ \hat{\mathbf{B}}_s \mathbf{\alpha}_{t_s} + \hat{\mathbf{B}}_t \mathbf{\alpha}_{t_t}) + \hat{\mathbf{B}}_u \mathbf{\alpha}_{t_u}\right) \\
    &= \left(\hat{\mathbf{A}}_u \circ \hat{\mathbf{A}}_t \circ \hat{\mathbf{A}}_s, \hat{\mathbf{A}}_u \circ \hat{\mathbf{A}}_t \circ \hat{\mathbf{B}}_s \mathbf{\alpha}_{t_s} + \hat{\mathbf{A}}_u \circ \hat{\mathbf{B}}_t \mathbf{\alpha}_{t_t} + \hat{\mathbf{B}}_u \mathbf{\alpha}_{t_u}\right) \\
    & = \left(\hat{\mathbf{A}}_u \circ \hat{\mathbf{A}}_t \circ \hat{\mathbf{A}}_s, \hat{\mathbf{A}}_u \circ (\hat{\mathbf{A}}_t \circ \hat{\mathbf{B}}_s \mathbf{\alpha}_{t_s} + \hat{\mathbf{B}}_t \mathbf{\alpha}_{t_t}) + \hat{\mathbf{B}}_u \mathbf{\alpha}_{t_u}\right) \\
    & = \left(\hat{\mathbf{A}}_u \circ \hat{\mathbf{A}}_t \circ \hat{\mathbf{A}}_s, \hat{\mathbf{A}}_u \circ \hat{\mathbf{A}}_t \circ \hat{\mathbf{B}}_s \mathbf{\alpha}_{t_s} + \hat{\mathbf{A}}_u \circ \hat{\mathbf{B}}_t \mathbf{\alpha}_{t_t} + \hat{\mathbf{B}}_u \mathbf{\alpha}_{t_u}\right) \\
    & = \mathbf{M}_s \otimes (\mathbf{M}_t \otimes \mathbf{M}_u).
\]
Thus, we get $(\mathbf{M}_s \otimes \mathbf{M}_t) \otimes \mathbf{M}_u = \mathbf{M}_s \otimes (\mathbf{M}_t \otimes \mathbf{M}_u)$, confirming associativity for $\mathbf{M}_i$. Similarly, 
\[
    (\mathbf{S}_s \otimes \mathbf{S}_t) \otimes\mathbf{S}_u &= \left(\bar{\mathbf{A}}_t \circ \bar{\mathbf{A}}_s, \bar{\mathbf{A}}_t \circ \bar{\mathbf{B}}_s + \bar{\mathbf{B}}_t\right) \otimes \mathbf{S}_u \\
    &= \left(\bar{\mathbf{A}}_u \circ (\bar{\mathbf{A}}_t \circ \bar{\mathbf{A}}_s), \bar{\mathbf{A}}_u \circ (\bar{\mathbf{A}}_t \circ \bar{\mathbf{B}}_s + \bar{\mathbf{B}}_t) + \bar{\mathbf{B}}_u\right) \\
    &= \left(\bar{\mathbf{A}}_u \circ \bar{\mathbf{A}}_t \circ \bar{\mathbf{A}}_s, \bar{\mathbf{A}}_u \circ \bar{\mathbf{A}}_t \circ \bar{\mathbf{B}}_s + \bar{\mathbf{A}}_u \circ \bar{\mathbf{B}}_t + \bar{\mathbf{B}}_u\right) \\
    & = \left(\bar{\mathbf{A}}_u \circ \bar{\mathbf{A}}_t \circ \bar{\mathbf{A}}_s, \bar{\mathbf{A}}_u \circ (\bar{\mathbf{A}}_t \circ \bar{\mathbf{B}}_s + \bar{\mathbf{B}}_t) + \bar{\mathbf{B}}_u\right) \\
    &= \left(\bar{\mathbf{A}}_u \circ \bar{\mathbf{A}}_t \circ \bar{\mathbf{A}}_s, \bar{\mathbf{A}}_u \circ \bar{\mathbf{A}}_t \circ \bar{\mathbf{B}}_s + \bar{\mathbf{A}}_u \circ \bar{\mathbf{B}}_t + \bar{\mathbf{B}}_u\right) \\
    & = \mathbf{S}_s \otimes (\mathbf{S}_t \otimes \mathbf{S}_u).
\]
Hence, $(\mathbf{S}_s \otimes \mathbf{S}_t) \otimes \mathbf{S}_u = \mathbf{S}_s \otimes (\mathbf{S}_t \otimes \mathbf{S}_u)$, confirming associativity for $\mathbf{S}_i$. Now, we can apply the parallel scan described in~\cref{alg:parallel_scan} for both $\mu_{t \in \mathcal{T}}$ and covariance $\Sigma_{t \in \mathcal{T}}$ based on the recurrence in~(\ref{eq:mean recur}, \ref{eq:cov recur}) and the defined associative operators $\otimes$. Employing the parallel scan algorithm offers significant computational benefits, especially for large-scale problems with numerous time steps $k$. The logarithmic time complexity ensures scalability, making it feasible to perform real-time computations or handle high-dimensional data efficiently.

\begin{figure}[!t]
\begin{minipage}[t]{0.54\textwidth}
\begin{algorithm}[H]
\caption{Parallel Scan for Mean and Covariance}\label{alg:parallel_scan}
\begin{algorithmic}[1]
\STATE \textbf{Input. } Given time stamps $\mathcal{T} = \{t_1, t_2, \ldots, t_K\}$, initial mean $\mu_{t_0}$ and covariance $\Sigma_{t_0}$, control policies $\{\mathbf{\alpha}_{t_1}, \mathbf{\alpha}_{t_2}, \ldots, \mathbf{\alpha}_{t_K}\}$, matrices $\{\mathbf{\Lambda}_{t_1}, \mathbf{\Lambda}_{t_2}, \ldots, \mathbf{\Lambda}_{t_K}\}$.

\STATE \textbf{Initialize} sequences $\{\mathbf{M}_i\}_{i=1}^K$ and $\{\mathbf{S}_i\}_{i=1}^K$:
    \STATE \textbf{for} $i = 1$ to $K$ \textbf{do in parallel}
        \STATE \hspace{2mm} Compute $\Delta_{i}(t_i) = t_i - t_{i-1}$.
        \STATE \hspace{2mm} Compute $\hat{\mathbf{A}}_{i} = e^{-\Delta_{i}(t_i)\mathbf{\Lambda}_{t_i}}$.
        \STATE \hspace{2mm} Compute $\hat{\mathbf{B}}_{i} = - e^{-\Delta_{i}(t_i)\mathbf{\Lambda}_{t_i}} \mathbf{\Lambda}_{t_i}^{-1} \left( \mathbf{I} - e^{-\Delta_{i}(t_i)\mathbf{\Lambda}_{t_i}} \right)$.
        \STATE \hspace{2mm} Compute $\bar{\mathbf{A}}_{i} = e^{-2\Delta_{i}(t_i)\mathbf{\Lambda}_{t_i}}$.
        \STATE \hspace{2mm} Compute $\bar{\mathbf{B}}_{i} = \frac{1}{2} e^{-2\Delta_{i}(t_i)\mathbf{\Lambda}_{t_i}} \mathbf{\Lambda}_{t_i}^{-1} \left( \mathbf{I} - e^{-2\Delta_{i}(t_i)\mathbf{\Lambda}_{t_i}} \right)$.
        \STATE \hspace{2mm} Set $\mathbf{M}_i = \left(\hat{\mathbf{A}}_{i}, \hat{\mathbf{B}}_{i} \mathbf{\alpha}_{t_i}\right)$.
        \STATE \hspace{2mm} Set $\mathbf{S}_i = \left(\bar{\mathbf{A}}_{i}, \bar{\mathbf{B}}_{i}\right)$.
    \STATE \textbf{end for}

    \STATE Parallel Scan $\{\mathbf{M}'_i\}_{i=1}^K = \texttt{ParallelScan}(\{\mathbf{M}_i\}_{i=1}^K, \otimes)$

    \STATE Parallel Scan $\{\mathbf{S}'_i\}_{i=1}^K = \texttt{ParallelScan}(\{\mathbf{S}_i\}_{i=1}^K, \otimes)$
    \STATE \textbf{for} $i = 1$ to $K$ \textbf{do in parallel}
        \STATE \hspace{2mm} $\mu_{t_i} = {\mathbf{M}'}_i^{(1)} \mu_{t_0} + {\mathbf{M}'_i}^{(2)}$
        \STATE \hspace{2mm} $\Sigma_{t_i} = {\mathbf{S}'}_i^{(1)} \Sigma_{t_0} + {\mathbf{S}'_i}^{(2)}$
    \STATE \textbf{end for}
\STATE \textbf{Return} $\mu_{t \in \mathcal{T}}$, $\Sigma_{t \in \mathcal{T}}$
\end{algorithmic}
\end{algorithm}
\end{minipage}
\begin{minipage}[t]{0.45\textwidth}
\begin{algorithm}[H]
\caption{\texttt{ParallelScan}}\label{alg:parallel_scan_procedure}
\begin{algorithmic}[1]
\STATE \textbf{Input. } Sequence of tuples $\{\mathbf{T}_1, \mathbf{T}_2, \ldots, \mathbf{T}_K\}$, associative operator $\otimes$.
    \STATE \textbf{Stage 1: Up-Sweep (Reduce)}.
    \FOR{$d = 0$ to $\lceil \log_2 K \rceil - 1$}
        \FOR{each subtree of height $d$ in parallel}
            \STATE Let $i = 2^{d+1}k + 2^{d+1} - 1$ for $k = 0, 1, \ldots$
            \IF{$i < K$}
                \STATE $\mathbf{T}_i = \mathbf{T}_{i - 2^d} \otimes \mathbf{T}_i$
            \ENDIF
        \ENDFOR
    \ENDFOR
    
    \STATE \textbf{Stage 2: Down-Sweep}.
    \STATE $\mathbf{T}_K = \mathbf{I}$, where $\mathbf{I}$ is the identity element for $\otimes$.
    \FOR{$d = \lceil \log_2 K \rceil - 1$ downto $0$}
        \FOR{each subtree of height $d$ in parallel}
            \STATE Let $i = 2^{d+1}k + 2^{d+1} - 1$ for $k = 0, 1, \ldots$
            \IF{$i < K$}
                \STATE $\mathbf{T}_{i - 2^d} = \mathbf{T}_{i - 2^d} \otimes \mathbf{T}_i$
                \STATE $\mathbf{T}_i = \mathbf{T}_{i - 2^d}$
            \ENDIF
        \ENDFOR
    \ENDFOR
    
    \STATE \textbf{Return}  Scanned sequence $\{\mathbf{T}'_1, \mathbf{T}'_2, \ldots, \mathbf{T}'_K\}$ where $\mathbf{T}'_i = \mathbf{T}_1 \otimes \mathbf{T}_2 \otimes \cdots \otimes \mathbf{T}_i$.
\end{algorithmic}
\end{algorithm}
\end{minipage}
\end{figure}

\section{Experimental Details}
\label{sec:app:details}

\subsection{Data Preprocessing}
\label{sec:app:data_preprocessing}

\BF{Preprocessing Pipeline. }
The preprocessing pipeline for the fMRI data involved several standard steps, including skull-stripping, slice-timing correction, motion correction, non-linear registration, and intensity normalization. All data were aligned to the Montreal Neurological Institute (MNI) standard space for consistency. A whole-brain mask was applied to exclude non-brain tissues, such as the skull, from further analysis. The fMRI data were parcellated into 450 regions of interest (ROIs), comprising 400 cortical parcels based on the Schaefer-400 atlas~\cite{10.1093/cercor/bhx179} and 50 subcortical parcels defined by Tian’s Scale III atlas~\cite{Tian2020}. The mean fMRI time-series for each ROI was extracted across all timepoints. To ensure magnetization equilibrium and minimize T1-relaxation effects, scanner instability, and initial participant adaptation, the first 10 volumes of each fMRI time-series were discarded.

\BF{Data Normalization. }
To ensure comparability across participants and reduce inter-subject variability, we applied a two-step normalization process to the fMRI data. First, participant-wise zero-mean centering was performed by subtracting the mean signal from each ROI within each subject. Second, a robust scaling procedure was applied, where the median signal was subtracted, and the resulting values were divided by the interquartile range (IQR), computed across all participants for each ROI. This normalization scheme follows the preprocessing protocols described in BrainJEPA~\citep{dong2024brain} and BrainLM~\citep{caro2024brainlm}, ensuring a fair comparison. After normalization, each fMRI sample was represented as a matrix of size $T \times N$, where $T$ corresponds to the number of timesteps and $N$ corresponds to the number of ROIs ($N = 450$).

\label{sec:dataset_specific}

\paragraph{UK Biobank (UKB)}
\label{sec:ukb}
The UKB is a population-based prospective study comprising 500,000 participants in the United Kingdom, designed to investigate the genetic and environmental determinants of disease \cite{sudlow2015uk}. This study utilized 41,072 rs-fMRI scans from the publicly available, preprocessed UKB dataset \cite{alfaro2018image}. The preprocessing pipeline included non-linear registration to MNI space using FSL’s $\texttt{applywarp}$ function, thereby ensuring standardized spatial alignment across participants \cite{jenkinson2012fsl}.

\paragraph{Human Connectome Project in Aging (HCP-A)}
\label{sec:hcpa}
The HCP-A is a large-scale neuroimaging initiative focused on characterizing structural and functional connectivity changes associated with aging across a wide age range \cite{bookheimer2019lifespan}. This study accessed 724 rs-fMRI samples from healthy individuals between 36 and 89 years of age. Preprocessed rs-fMRI volumes provided from the HCP-A dataset were utilized for subsequent analyses.

\paragraph{Autism Brain Imaging Data Exchange (ABIDE)}
\label{sec:abide}
The ABIDE consortium aims to elucidate the neural mechanisms underlying autism spectrum disorder \cite{di2014autism}. In the present work, 1,102 rs-fMRI samples were obtained from the Neuro Bureau Preprocessing Initiative \cite{craddock2013neuro}, which employs the Configurable Pipeline for the Analysis of Connectomes (C-PAC) \cite{craddock2013towards}. The preprocessing steps included slice-timing correction, motion realignment, intensity normalization (with a 4D global mean set to 1000), and nuisance signal removal. Nuisance regression involved a 24-parameter motion model, component-based noise correction (CompCor) \cite{behzadi2007component} with five principal components derived from white matter and cerebrospinal fluid signals, and linear/quadratic trend removal. Functional-to-anatomical registration was performed via a boundary-based rigid-body approach, while anatomical-to-standard registration utilized ANTs. Band-pass filtering and global signal regression were not applied.

\paragraph{Attention Deficit Hyperactivity Disorder 200 (ADHD200)}
\label{sec:adhd200}
The ADHD200 dataset comprises 776 rs-fMRI and anatomical scans collected from individuals aged 7 to 21, including 491 typically developing individuals and 285 participants diagnosed with ADHD \cite{brown2012adhd}. A total of 669 rs-fMRI datasets were selected for this study, specifically the preprocessed versions provided by the Neuro Bureau Preprocessing Initiative (Athena Pipeline) \cite{bellec2017neuro}.

\paragraph{Human Connectome Project for Early Psychosis (HCP-EP)}
\label{sec:hcpep}
The HCP-EP is a neuroimaging initiative focused on understanding early psychosis, defined as the first five years following symptom onset, in individuals aged 16–35. The cohort includes participants with affective psychosis, non-affective psychosis, and healthy controls \cite{jacobs2024introduction, Prunier2021-ao}. For this study, 176 rs-fMRI scans were analyzed. Preprocessing was conducted using fMRIPrep \cite{esteban2019fmriprep}, followed by denoising with Nilearn \cite{Nilearn}. The denoising process employed a 24-parameter motion model (including translations, rotations, their derivatives, and quadratic terms) and CompCor-derived components extracted from white matter and cerebrospinal fluid masks. Additionally, all confound variables were demeaned to ensure consistency across participants.
\paragraph{Transdiagnostic Connectome Project (TCP)}
\label{sec:tcp}
The TCP investigates neural mechanisms underlying psychiatric conditions across traditional diagnostic boundaries \cite{chopra2024transdiagnostic}. This study included rs-fMRI data from 236 participants aged 18 to 70, consisting of 144 individuals with diverse psychiatric diagnoses and 92 healthy controls \cite{ds005237:1.0.0}. The same harmonized preprocessing and denoising pipelines, as utilized for the HCP-EP data, were applied to all TCP scans using fMRIPrep and Nilearn.

\begin{table}[ht!]
\caption{Pre-training hyper-parameters}
\vspace{-2mm}
\label{table:pretraining_hyperparameters}
\scriptsize
\centering
\begin{tabular}{c|ccccccccc}
\toprule
\textbf{BDO Variants} & \textbf{Train EP} & \textbf{Warm-up EP} & \textbf{LR} &\textbf{Initial LR} &\textbf{Minimum LR} & \textbf{Batch Size} & \textbf{$\bbR^d$} & \textbf{$\#$ of base matrices} (L)  & \textbf{EMA Momentum} \\ 
\midrule
BDO (5M) & 200 & 10 & 0.001 & 0.0001 & 0.0001 & 128 & 192 & 100 & [0.996, 1] \\
BDO (21M) & 200 & 10 & 0.001 & 0.0001 & 0.0001 & 128 & 384 & 100 & [0.996, 1] \\
BDO (85M) & 200 & 10 & 0.001 & 0.0001 & 0.0001 & 128 & 768 & 100 & [0.996, 1] \\
\bottomrule
\end{tabular}
\vspace{-2mm}
\end{table}

\subsection{Pre-training Stage}

\BF{Pre-training Data. }
For self-supervised pre-training, we utilized the large-scale UKB dataset, which comprises resting-state fMRI recordings and medical records from 41,072 participants~\citep{alfaro2018image}. We utilized 80\% of the dataset for pre-training, while the remaining 20\% held-out data was reserved for downstream evaluation. We used a fixed random seed (42) to ensure reproducibility when partitioning the UKB dataset into pre-training and held-out subsets. All experiments, including the reproduction of foundation model baselines, were conducted using the same dataset split to maintain consistency. 

\BF{Irregular Multivariate Time-Series Sampling. }  
We introduce irregularity in the time-series data by subsampling both the observation timestamps \(\mathcal{T}_{\text{obs}}\) and the corresponding fMRI signals \(\mathcal{Y}_{\text{obs}}\). Unlike conventional approaches that assume uniformly spaced time points~\citep{caro2024brainlm, dong2024brain}, we select a uniformly sampled subset of timestamps from the full sequence, ensuring that only a fraction of the fMRI signal is observed. Specifically, from each full-length fMRI recording, we randomly sample 160 timesteps ($T = 160$), introducing variability in temporal resolution across different samples. This choice reflects the fundamental nature of brain dynamics, which evolve continuously rather than discretely, and encourages the model to infer missing states from incomplete sequences.

\BF{Temporal Masking. }
To encourage robust representation learning and improve generalization, we employ \textit{temporal masking}, where a subset of the 160 sampled time points is randomly masked during training. We apply a masking ratio of \(\gamma = 0.75\), meaning that 75\% of the sampled timesteps are hidden while the model is trained to reconstruct them. In~\cref{fig:ablation}, we vary $\gamma$ across $[0.4, 0.5, 0.6, 0.7, 0.75, 0.8, 0.9]$ to examine the effect of masking ratio in learning robust representations. Actual reconstruction results are provided in the internal and external datasets as visulized in~\cref{fig:ukb_reconstruction,fig:hcpa_reconstruction}.

\BF{Pre-training Algorithm. } The pre-training of BDO follows the procedure outlined in Algorithm~\ref{algorithm:pretrain}. Given an observed fMRI time-series $\mathcal{Y}_{\text{obs}}$, we employ a masked reconstruction strategy, where a random proportion $\gamma$ of the temporal signals is masked to encourage the model to learn meaningful representations. The pre-training objective leverages amortized inference to approximate latent dynamics while enforcing spatio-temporal consistency through structured latent representations. At each iteration, a subset of observed time-series $\mathcal{Y}_{\text{ctx}}$ is used as context, while the masked portion $\mathcal{Y}_{\text{tar}}$ serves as the target for reconstruction. The encoder network $\bT_{\theta}$ maps the context data to a sequence of latent states $\bz_{t \in \mathcal{T}_{\text{ctx}}}$, which are then used to estimate drift terms and control policies, forming the basis for latent trajectory prediction. The decoder network $\bD_{\psi}$ reconstructs the missing target states, optimizing a training objective $\mathcal{L}(\theta, \psi)$ that aligns the predicted and true trajectories. 

\BF{Pre-training Details. } 
We trained BDO using a batch size of 128 and a total of 200 pre-training epochs. The learning rate was scheduled using a cosine decay scheduler~\citep{loshchilov2016sgdr} with a 10-epoch warm-up phase. During warm-up, the initial learning rate was set to $0.0001$, which increased to a peak learning rate of $0.001$ before gradually decaying to a minimum learning rate of $0.0001$. For optimization, we employed the Adam optimizer~\citep{diederik2014adam}. Across all BDO configurations, we used a fixed number of basis $l=100$ and consistently multiplied a time scale parameter of 0.1 to observation times for all datasets. To update $\bar{\theta}$, Exponential Moving Average (EMA) momentum is used and linearly increased from $0.996$ to $1.0$. It is worth noting that our models required minimal hyperparameter tuning, which demonstrates that the proposed approximation scheme operates stably and that our method functions robustly.

\textbf{Model architecture of BDO.}
To maintain the structural advantages of our SSM-based formulation, we designed our encoder network architecture in a straightforward manner. In this regard, the networks used for pre-training BDO is listed in below, where \texttt{N=450} is the number of ROIs and \texttt{d} is the dimension of latent space $\bbR^d$ as described in~\cref{table:pretraining_hyperparameters} for each models.
\vspace{-3mm}
\begin{itemize}
    \item \textbf{Encoder network} $q_{\theta}$: \\
    \texttt{Input($N$) $\to$ Linear(d) $\to$ ReLU() $\to$ LayerNorm(d) $\to$ Linear(d) $\to$ ReLU() $\to$ LayerNorm(d) $\to$ 12 $\times$ [LayerNorm(d) $\to$ Attn(d) $\to$ FFN(d)]}
    \item \textbf{FFN}: 
    \\
    \texttt{Input(d) $\to$ LayerNorm(d) $\to$ Linear(4 $\times$ d) $\to$ GeLU() $\to$ Linear(d) $\to$ Residual(Input(d))}
    \item \textbf{Attn}: 
    \\
    \texttt{Input(Q, K, V) $\to$ Normalize(Q) $\to$ Linear(Q) $\to$ Linear(K) $\to$ Linear(V) $\to$ Attention(Q, K) $\to$ Softmax(d) $\to$ Dropout() $\to$ Matmul(V) $\to$ LayerNorm(d) $\to$ Linear(d) $\to$ Residual(Q)}
    \item \textbf{Decoder network} $\bD_{\psi}$: \\
    \texttt{Input(d) $\to$ Linear(N) $\to$ ReLU() $\to$ Dropout() $\to$ Linear(d)}
\end{itemize}

\subsection{Source of Efficiency}\label{sec:source_efficiency} 
The primary source of the efficiency of BDO stems from our SSM formulation. By introducing a strong inductive bias tailored to the inherent characteristics of fMRI time-series data such as existing complex temporal relationships, we can efficiently model brain dynamics with significantly fewer parameters as demonstrated in~\cref{fig:scalability_gpu}. Compared to our method, fully data-driven approaches like BrainLM and BrainJEPA may lack an efficient mechanism to capture temporal dependencies, necessitating a larger number of parameters to learn these relationships~\citep{caro2024brainlm, dong2024brain}.

A primary distinguishing feature of our approach is the method by which we process fMRI signals within our transformer architecture. Although our model employs the same Vision Transformer backbone~\citep{alexey2020image}\footnote{\url{https://github.com/google-research/vision_transformer}, licensed under Apache 2.0.} to ensure consistency with other foundational models~\citep{caro2024brainlm, dong2024brain}, our structural design effectively mitigates the inefficiencies commonly found in previous methods. Specifically, existing methods reshape fMRI data into image-like patches, transforming its structure from $(\texttt{K}, \texttt{d})$-observation length \texttt{k} and latent dimension \texttt{d}-to $(\texttt{K} // \texttt{W} \times \texttt{d}, \texttt{W})$, where $\texttt{W}$ represents the window size. This transformation artificially inflates the effective sequence length to $(\texttt{K} // \texttt{W} \times \texttt{d})$, leading to a computational complexity of $\mathcal{O}((\texttt{K} // \texttt{W} \times \texttt{d})^2 \texttt{W})$. In contrast, our approach retains the data in its original $(\texttt{K}, \texttt{d})$ format, preserving the natural temporal structure and reducing computational complexity to $\mathcal{O}(\texttt{K}^2 \texttt{d})$. This complexity is sufficient for our model to capture temporal dynamics effectively due to the structured state-space model (SSM) formulation, which inherently models long-range dependencies without requiring excessive parameterization.

Furthermore, by efficiently modeling temporal relationships, our approach eliminates the need for additional structural transformations. Unlike other methods that rely on ROI embedding vectors and process fMRI data in a transformed format—typically $(\texttt{K} // \texttt{W} \times \texttt{d}, \texttt{W})$—our model operates directly on $(\texttt{K}, \texttt{d})$, leveraging a stack of self-attention layers efficiently. This not only simplifies the processing pipeline but also avoids the extra computational overhead introduced by artificial segmentation.

Thus, we believe that our SSM-based approach provides a more efficient and scalable framework for brain dynamics modeling, offering significant advantages in both computational cost and representational power.

\begin{figure}[!t]
\begin{minipage}[t]{0.51\textwidth}
\begin{algorithm}[H]
\caption{Pre-training BDO}
\begin{algorithmic}[1]
    \STATE \textbf{Input. } Time-series $\mathcal{Y}_{\text{obs}} = \by_{t \in \mathcal{T}_{\text{obs}}}$, masking ratio $\gamma$, encoder network $\bT_{\theta}$, decoder network $\bD_{\psi}$
    \FOR{$m=1, \cdots, M$}
        \STATE Get $\mathcal{Y}_{\text{ctx}}, \mathcal{Y}_{\text{tar}}$ by masking $\gamma \%$ of temporal signals.
        \STATE Sample $\bz_{t \in \mathcal{T}_{\text{ctx}}} \sim \prod_{t \in \mathcal{T}_{\text{ctx}}} q_{\theta}(\bz_t | \mathcal{Y}_{\text{ctx}})$ using~\eqref{eq:encoder network}
        \STATE Compute $\{\bD_t, u_t, \alpha^{\theta}_t\}_{t \in \mathcal{T}_{\text{ctx}}}$ using~\eqref{eq:drift approximation}
        \STATE Estimate $\{\mu_t, \Sigma_t\}_{t \in \mathcal{T}_{\text{tar}}}$ with parallel scan algorithm.
        \STATE Sample $\bX^{\theta}_{t \in \mathcal{T}_{\text{tar}}} \stackrel{i.i.d}{\sim} \otimes_{t \in \mathcal{T}_{\text{tar}}} \mathcal{N}(\mu_t, \Sigma_t)$.
        \STATE Sample $\hat{\bz}_{t \in \mathcal{T}_{\text{tar}}} \sim \prod_{t \in \mathcal{T}_{\text{tar}}} p(\hat{\bz}_t | \bX^{\theta}_t)$
        \STATE Compute $\hat{\mathcal{L}}(\theta, \psi)$ using~\eqref{eq:rescaled training objective}
        \STATE Update $(\theta, \psi)$ with $\nabla_{\theta, \psi} \hat{\mathcal{L}}(\theta, \psi)$
        \STATE Apply $\bar{\theta} \leftarrow \texttt{EMA}(\theta)$
    \ENDFOR
\end{algorithmic}\label{algorithm:pretrain}
\end{algorithm}
\end{minipage}
\begin{minipage}[t]{0.47\textwidth}
\begin{algorithm}[H]
\caption{Fine tuning BDO for downstream tasks}
\begin{algorithmic}[1]
    \STATE \textbf{Input. } Time-series and label $(\mathcal{Y}_{\text{obs}}, \mathcal{O}_{\text{obs}})$, pre-trained encoder network $\bT_{\theta^{\star}}$.
        \STATE Sample $\bz_{t \in \mathcal{T}_{\text{obs}}} \sim \prod_{t \in \mathcal{T}_{\text{obs}}} q_{\theta^{\star}}(\bz_t | \mathcal{Y}_{\text{obs}})$ 
        using~\eqref{eq:encoder network}
        \STATE Compute optimal control policy $\alpha_{t \in \mathcal{T}_{\text{obs}}} = \bB_{\theta^{\star}} \bz_{t \in \mathcal{T}_{\text{obs}}}$
        \STATE Compute the universal feature $\bbA = \frac{1}{|\mathcal{T}_{\text{obs}}|}\sum_{t \in \mathcal{T}_{\text{obs}}} \alpha_t$
        \STATE Predict $\hat{\mathcal{O}}_{\text{obs}} = h_{\zeta}(\bbA)$        
        \IF{\textit{Linear probing}}
            \STATE Freeze the pre-trained encoder network $\bT_{\theta^{\star}}$
            \STATE Compute $\mathcal{L}(\theta^{\star}, \zeta) = \mathcal{L}_{\text{task}}(\mathcal{O}_{\text{obs}}, \hat{\mathcal{O}}_{\text{obs}})$ using~\eqref{eq:downstream_task_loss}
            \STATE Update $\zeta$ with $\nabla_{\zeta} \mathcal{L}(\theta^{\star}, \zeta)$
        \ELSIF{\textit{Fine tuning}}
            \STATE Unfreeze the pre-trained encoder network $\bT_{\theta^{\star}}$
            \STATE Compute $\mathcal{L}(\theta^{\star}, \zeta) = \mathcal{L}_{\text{task}}(\mathcal{O}_{\text{obs}}, \hat{\mathcal{O}}_{\text{obs}})$ using~\eqref{eq:downstream_task_loss}
            \STATE Update ($\theta^{\star}, \zeta$) with $\nabla_{\theta^{\star}, \zeta} \mathcal{L}(\theta^{\star}, \zeta)$
        \ENDIF
\end{algorithmic}\label{algorithm:downstream}
\end{algorithm}
\vspace{-8mm}
\end{minipage}
\end{figure}

\subsection{Downstream Evaluation Stage}
To assess the generalization and transferability of BDO, we conducted experiments across multiple datasets and tasks, encompassing both demographic and psychiatric prediction. Datasets used in this evaluation have distinct temporal resolutions and varying numbers of timesteps, reflecting the irregularity of real-world fMRI data acquisition. Additional details are described in~\cref{table:participant_demographics}. Note that in the downstream evaluation, irregular sampling and temporal masking were disabled. The full sequence of fMRI signals, timestamps, and corresponding labels were used, denoted as $(\mathcal{Y}_{\text{obs}}, \mathcal{T}_{\text{obs}}, \mathcal{O}_{\text{obs}})$.

\begin{table}[t!]
\caption{Dataset Subject Demographics}
\vspace{-2mm}
\label{table:participant_demographics}
\scriptsize
\centering
\resizebox{\textwidth}{!}{
\begin{tabular}{c|cccccc}
\toprule
\textbf{Category} & \textbf{UKB} & \textbf{HCP-A} & \textbf{ABIDE} & \textbf{ADHD200} & \textbf{HCP-EP} & \textbf{TCP} \\
\midrule
\# of subjects & 41,072 & 724 & 1,102 & 669 & 176 & 236 \\
Age, mean (SD) & 54.98 (7.53) & 60.35 (15.74) & 17.05 (8.04) & 11.61 (2.97) & 23.39 (3.95) & 33.96 (13.13) \\
Female, \% (n) & 52.30 (21,480) & 56.08 (406) & 14.79 (163) & 36.17 (242) & 38.07 (67) & 56.78 (134) \\
Patient, \% (n) & - & - & 48.19 (531) & 58.15 (389) & 68.18 (120) & 61.02 (144) \\
Target Population & Healthy Population & Healthy Population & \begin{tabular}{c}ASD\\ Healthy Population\end{tabular} & \begin{tabular}{c}ADHD\\ Healthy Population\end{tabular} & \begin{tabular}{c}Psychotic Disorder\\ Healthy Population\end{tabular} & \begin{tabular}{c}Psychiatric Disorders \\ Healthy Population\end{tabular} \\
\bottomrule
\end{tabular}
} %
\vspace{-2mm}
\end{table}

\textbf{Internal Evaluation.} For \textit{internal evaluation}, we utilized a 20\% held-out subset of the UKB dataset, which was excluded from pre-training. This evaluation focused on age regression and gender classification, leveraging both LP and FT to analyze how well the model retains and transfers knowledge acquired during pre-training.

\textbf{External Evaluation.} For \textit{external evaluation}, we examined the ability of BDO to generalize to unseen datasets. Demographic and trait prediction was performed on the HCP-A dataset, where LP and FT were employed to assess model performance on age, gender, neuroticism, and flanker scores. Beyond demographic characteristics, we evaluated psychiatric diagnosis classification using 4 clinical fMRI datasets, including ABIDE, ADHD200, HCP-EP, and TCP. These evaluations relied on LP, as it provides a controlled assessment of the learned representations and their applicability to clinical classification tasks.

\BF{Random Splits. }
All the datasets are partitioned into training, validation, and test sets using a 6:2:2 ratio to ensure fair and reproducible evaluation. To maintain consistency, we perform partitioning with 3 consecutive random seeds, 0, 1, and 2.
\begin{itemize}[leftmargin=10pt]
\item For classification tasks, such as gender classification, stratified sampling is applied to preserve class distributions across the training, validation, and test sets. 
\item For regression tasks, such as age regression, binning-based stratified sampling is employed. In this approach, the continuous target variable is first discretized into bins before applying stratified sampling, ensuring a balanced distribution of the target variable and mitigating potential biases from uneven data partitioning. Additionally, to improve numerical stability and facilitate optimization, the target variable is normalized using Z-score normalization, where the mean is subtracted, and the result is divided by the standard deviation.
\item The distributions of the three random splits for age regression tasks with the UKB and HCP-A datasets, and six classification tasks with UKB gender, HCP-A gender, ABIDE diagnosis, ADHD200 diagnosis, HCP-EP diagnosis, and TCP diagnosis are described in Figure~\ref{fig:split_distribution_ukb}$-$\ref{fig:split_distribution_cls}.
\end{itemize}

\BF{Extracting the Universal Feature $\bbA$. } To extract the \textit{universal feature} $\bbA$, we define $f$ as \textit{mean-pooling} over the sequence of control signals $\alpha_{t \in \mathcal{T}}$, given by $\bbA := f(\alpha_{t \in \mathcal{T}}) = \frac{1}{|\mathcal{T}|} \sum_{t \in \mathcal{T}} \alpha_t$. This formulation ensures that $\bbA$ serves as a compact and transferable representation of the underlying spatio-temporal dynamics captured by the optimal control signals. To enhance biological interpretability, mean-pooling is chosen as it provides a \textit{global summary} of the temporal evolution of the control sequence while suppressing high-frequency fluctuations that may arise due to local variations in $\alpha_t$. Although we believe that mean-pooling provides a robust and scalable approach for summarizing temporal dynamics, we acknowledge that more sophisticated aggregation methods, such as weighted pooling or recurrent architectures, could further enhance downstream performance. These approaches may offer additional advantages for analyzing temporal dynamics, such as facilitating interpretability through attention weight analysis or capturing long-range dependencies. We leave the exploration of these advanced aggregation strategies for future work.

\BF{Downstream Evaluation Algorithm. } To evaluate the effectiveness of BDO on downstream tasks, we follow the procedure outlined in Algorithm~\ref{algorithm:downstream}. Given an observed fMRI time-series $\mathcal{Y}_{\text{obs}}$ and its corresponding labels $\mathcal{O}_{\text{obs}}$, we extract the universal feature representation $\bbA$ using the pre-trained encoder $\bT_{\theta^{\star}}$. This representation is subsequently used for classification or regression tasks through either LP or FT.
\begin{itemize}[leftmargin=10pt]
\item In LP setting, we freeze the pre-trained encoder $\bT_{\theta^{\star}}$ and train only the task-specific head $h_{\zeta} : \bbR^d \to \bbR^N$ (single linear layer). The objective function $\mathcal{L}(\theta^{\star}, \zeta)$ measures the discrepancy between the predicted $\hat{\mathcal{O}}_{\text{obs}}$ and ground-truth $\mathcal{O}_{\text{obs}}$, and is optimized with respect to $\zeta$.
\item In FT setting, the entire model, including $\bT_{\theta^{\star}}$, is optimized. Both the encoder and task-specific head $h_{\zeta}$ (single linear layer) are updated jointly to refine the feature extraction process for the target task.
\end{itemize}
\textbf{Training Objective for Downstream tasks. } The loss function for downstream tasks is defined based on the nature of the prediction problem: classification tasks use Binary Cross-Entropy (BCE) loss to measure the discrepancy between predicted and true class probabilities, while regression tasks employ Mean Squared Error (MSE) loss to minimize the squared differences between predicted and actual values.

\textbf{Model Selection. }  
To determine the optimal model for each downstream task, we performed a grid search over key hyperparameters such as learning rate and batch size. For each task, we evaluated multiple configurations using the validation set and selected the model that achieved the best performance based on the predefined evaluation metric. The complete set of hyperparameters is provided in~\cref{tab:hyperparam_grid}.

\begin{equation}
\mathcal{L}_{\text{task}}(\mathcal{O}_{\text{obs}}, \hat{\mathcal{O}}_{\text{obs}}) =
\begin{cases}
    -\frac{1}{N} \sum_{i=1}^{N} \left[ \mathcal{O}_{\text{obs}, i} \log \hat{\mathcal{O}}_{\text{obs}, i} + (1 - \mathcal{O}_{\text{obs}, i}) \log (1 - \hat{\mathcal{O}}_{\text{obs}, i}) \right], & \text{if classification} \\
    \frac{1}{N} \sum_{i=1}^{N} (\mathcal{O}_{\text{obs}, i} - \hat{\mathcal{O}}_{\text{obs}, i})^2, & \text{if regression}
\end{cases}
\label{eq:downstream_task_loss}
\end{equation}

\begin{table}[ht]
    \centering
    \caption{Search space of end-to-end fine-tuning (FT) and linear probe (LP).}\label{tab:hyperparam_grid}
    \begin{tabular}{lcc}
        \toprule
        \textbf{Configurations} & \textbf{FT} & \textbf{LP} \\
        \midrule
        Optimizer & AdamW~\citep{loshchilov2017decoupled} & Adam~\citep{diederik2014adam} \\
        Training epochs & $50$ & $50$ \\
        Batch size & $[16, 32]$ & $[16, 32, 64]$ \\
        LR scheduler & cosine decay & cosine decay \\
        LR & $[0.001]$ & $[0.01, 0.005]$ \\
        Minimum LR & $[0, 0.0001, 0.001]$ & $[0.001, 0.005]$ \\
        Weight decay & $[0, 0.01]$ & $[0]$ \\
        Layer-wise LR decay & $[0.85, 0.90, 0.95]$ & N.A. \\
        \bottomrule
    \end{tabular}
\end{table}

\begin{figure*}[ht]
\centering
\includegraphics[width=0.9\textwidth,]{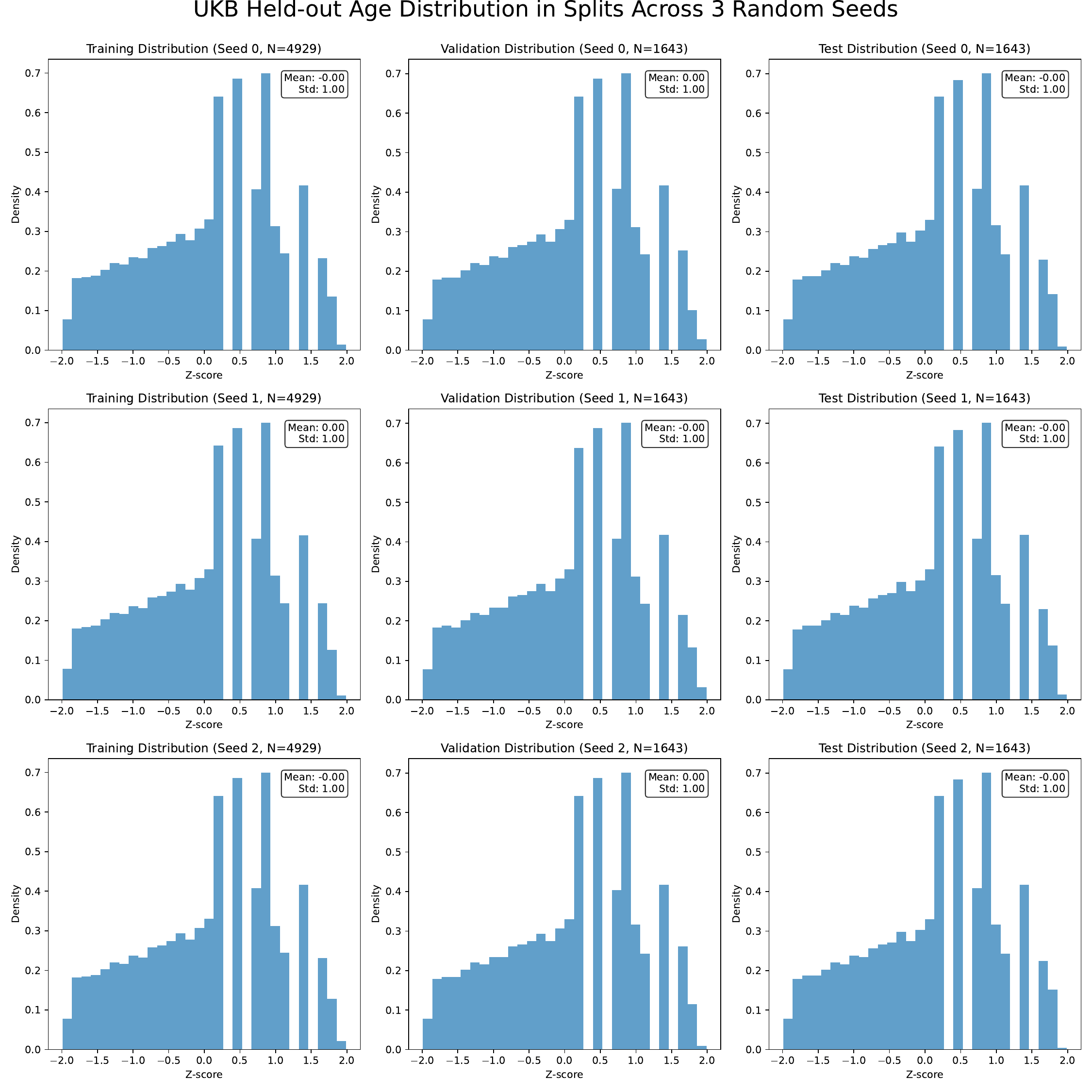}
\caption{Age distribution across training, validation, and test splits for the UKB held-out age regression task under three different random seeds (0, 1, and 2). The dataset is partitioned using a 6:2:2 ratio, with binning-based stratified sampling applied to maintain a balanced target variable distribution. To enhance numerical stability, Z-score normalization is applied to the age variable. Each row represents a different random seed, illustrating the consistency of the sampling procedure across splits.}\label{fig:split_distribution_ukb}
\end{figure*}

\newpage
\begin{figure*}[ht]
\centering
\includegraphics[width=0.9\textwidth,]{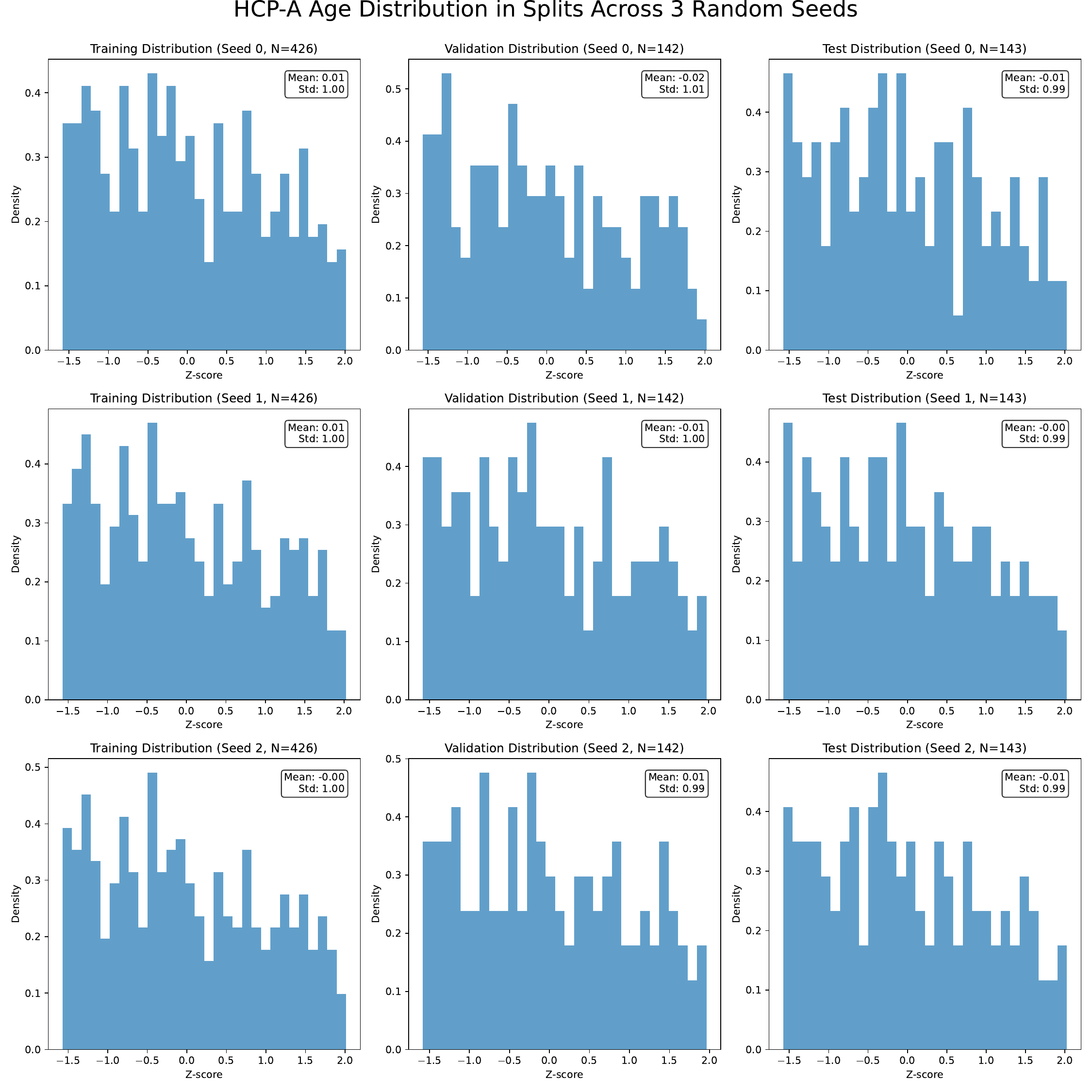}
\caption{Age distribution across training, validation, and test splits for the HCP-A age regression task under three different random seeds (0, 1, and 2). The dataset is partitioned using a 6:2:2 ratio, with binning-based stratified sampling applied to maintain a balanced target variable distribution. To enhance numerical stability, Z-score normalization is applied to the age variable. Each row represents a different random seed, illustrating the consistency of the sampling procedure across splits.}\label{fig:split_distribution_hcp}
\end{figure*}

\newpage
\begin{figure*}[ht]
\centering
\includegraphics[width=0.9\textwidth,]{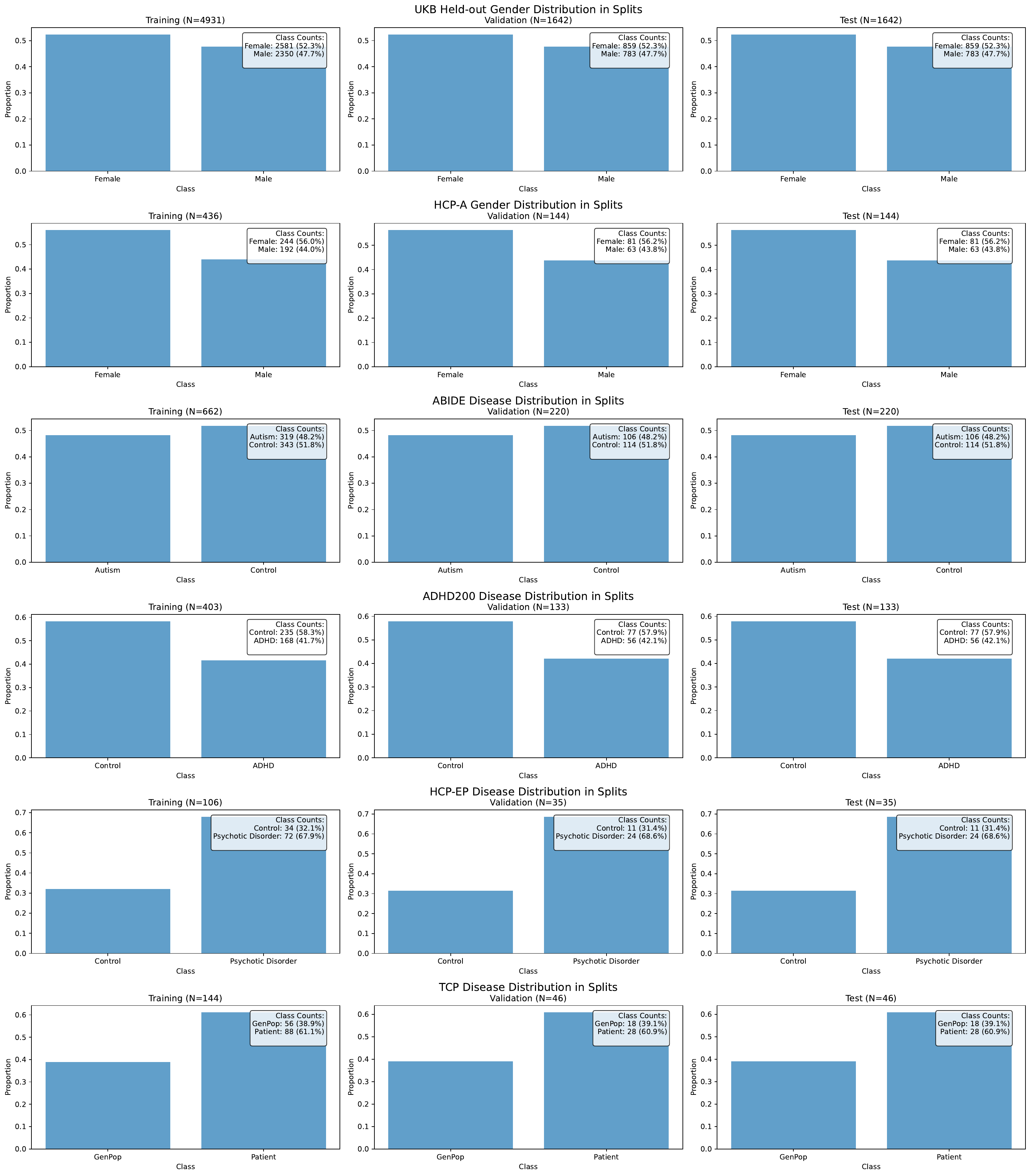}

\caption{Label distributions across six classification tasks (UKB held-out gender, HCP-A gender, ABIDE autism, ADHD200 ADHD, HCP-EP psychotic disorder, and TCP patient) for training, validation, and test splits. Each row corresponds to a different task, with columns representing the proportion of samples per class across data splits. Stratified sampling ensures that label distributions remain consistent across splits, despite variations in sample composition. To illustrate this, we visualize the distributions using a single random seed (0). Gender classification tasks are divided into Female/Male categories, while disease classification tasks distinguish between Control and Patient groups (ASD vs. Control for ABIDE, ADHD vs. Control for ADHD200, Psychotic disorder vs. Control for HCP-EP, and GenPop vs. Patient for TCP).}\label{fig:split_distribution_cls}
\end{figure*}

\newpage
\begin{figure*}[ht]
\centering
\includegraphics[width=0.99\textwidth,]{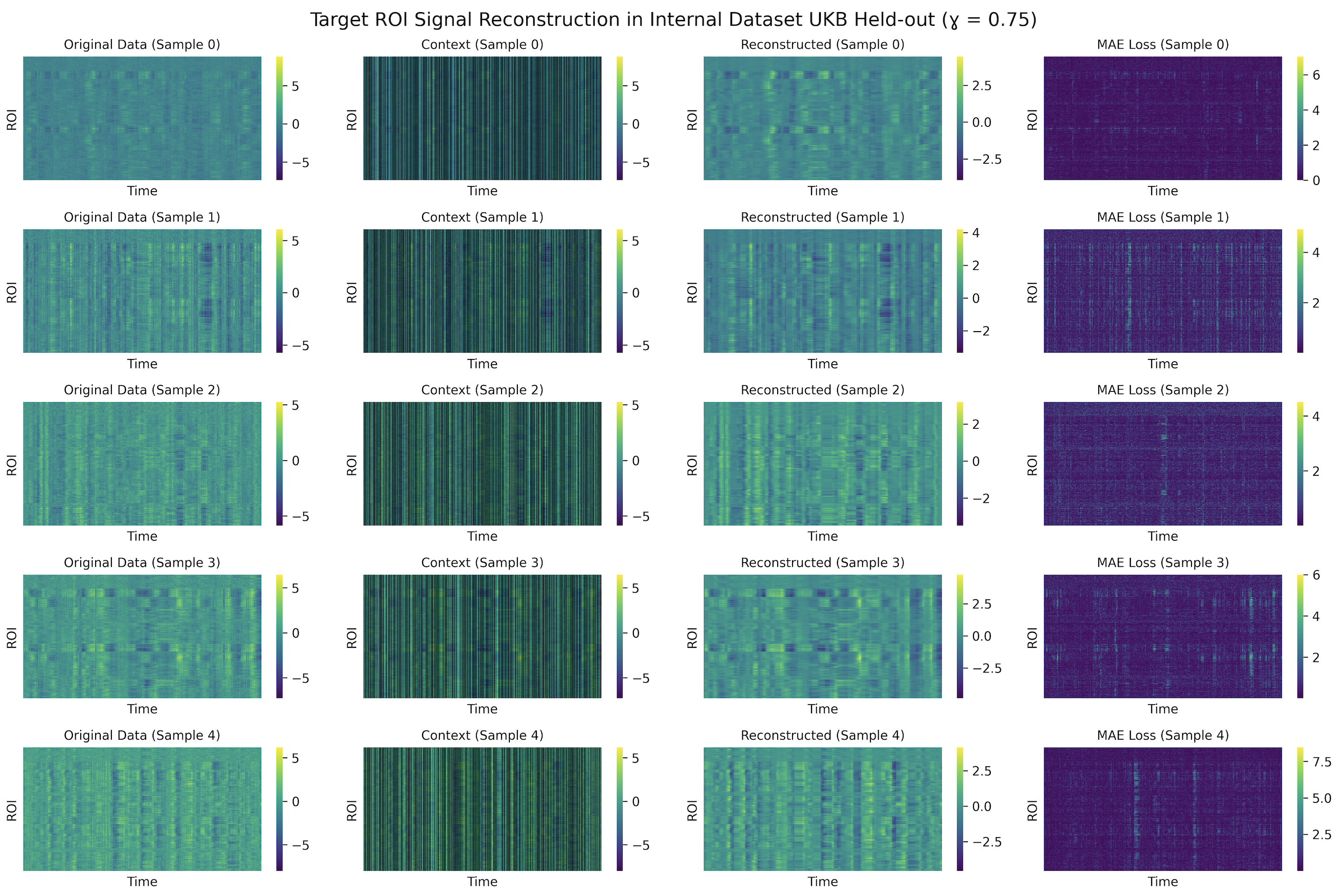}
    \caption{Reconstruction quality of BDO in the UKB held-out subset (internal dataset). Five samples are randomly drawn for visualization, with a mask ratio of $\gamma = 0.75$. Each column represents the original fMRI sample, context with masking patterns, reconstructed sample, and MAE (Mean Absolute Error) heatmaps. Although we set the mask ratio as high as $75\%$, the reconstruction quality remains robust, demonstrating that BDO efficiently captures the underlying brain dynamics and successfully reconstructs missing regions with high fidelity.}\label{fig:ukb_reconstruction}
\end{figure*}

\newpage
\begin{figure*}[ht]
\centering
\includegraphics[width=0.99\textwidth,]{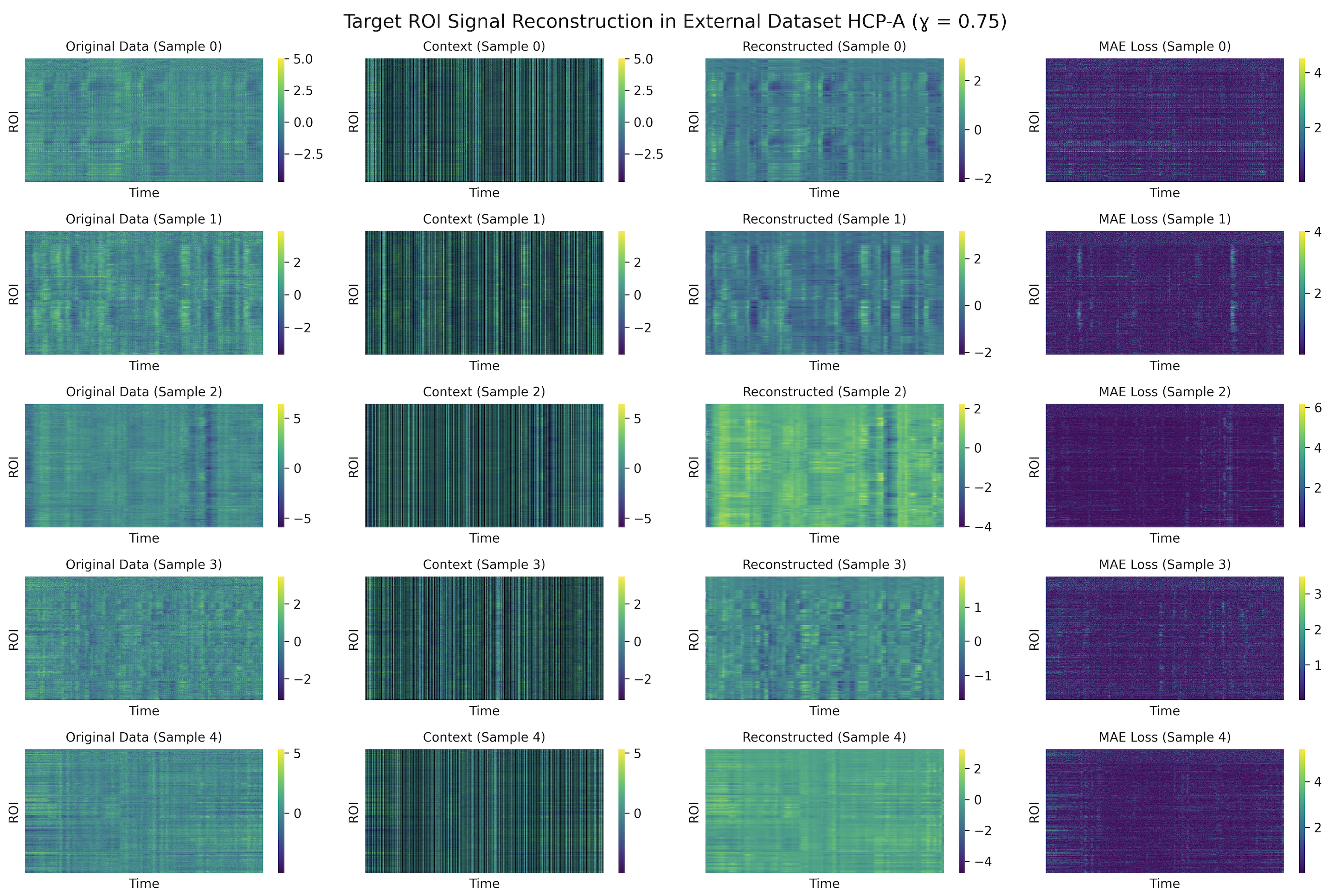}
\caption{Reconstruction quality of BDO in HCP-A (external dataset). Five samples are randomly drawn for visualization, with a mask ratio of $\gamma = 0.75$. Each column represents the original fMRI sample, context with masking patterns, reconstructed sample, and MAE (Mean Absolute Error) heatmaps. Although we set the mask ratio as high as $75\%$, the reconstruction quality remains robust, demonstrating that BDO efficiently captures the underlying brain dynamics and successfully reconstructs missing regions with high fidelity.}\label{fig:hcpa_reconstruction}
\end{figure*}

\end{document}